\newtheorem{theorem}{\bf Theorem}
\newtheorem{lemma}{\bf Lemma}
\newtheorem{definition}{\bf Definition}
\begin{document}

\date{}

\title{Weights Shuffling for Improving DPSGD in Transformer-based Models}

\author{
{\rm Jungang Yang}\\
Shanghai Jiao Tong University\\
\textit{yangjungang@shu.edu.cn}
\and
{\rm Zhe Ji}\\
Shanghai Jiao Tong University\\
\textit{ji\_zhe@sjtu.edu.cn}
\and
{\rm Liyao Xiang}\\
Shanghai Jiao Tong University\\
\textit{xiangliyao08@sjtu.edu.cn}
} 

\IEEEoverridecommandlockouts
\makeatletter\def\@IEEEpubidpullup{6.5\baselineskip}\makeatother

\maketitle
\begin{abstract}
	Differential Privacy (DP) mechanisms, especially in high-dimensional settings, often face the challenge of maintaining privacy without compromising the data utility. This work introduces an innovative shuffling mechanism in Differentially-Private Stochastic Gradient Descent (DPSGD) to enhance the utility of large models at the same privacy guarantee of the unshuffled case. Specifically, we reveal that random shuffling brings additional randomness to the trajectory of gradient descent while not impacting the model accuracy by the permutation invariance property --- the model can be equivalently computed in both forward and backward propagations under permutation. We show that permutation indeed improves the privacy guarantee of DPSGD in theory, but tracking the exact privacy loss on shuffled model is particularly challenging. Hence we exploit the approximation on sum of lognormal distributions to derive the condition for the shuffled DPSGD to meet the DP guarantee. Auditing results show that our condition offers a DP guarantee quite close to the audited privacy level, demonstrating our approach an effective estimation in practice. Experimental results have verified our theoretical derivation and illustrate that our mechanism improves the accuracy of DPSGD over the state-of-the-art baselines on a variety of models and tasks.
\end{abstract}


\section{Introduction}

Recent work has revealed that trained neural networks pose great threats in leaking sensitive training data. Differential Privacy (DP) \cite{dwork2006calibrating} serves both as a measure to quantify the upper bound of such information leak, and as mechanisms to ensure any individual sample's impact on the model is negligible. The core principle of DP is to introduce random perturbation to the queried data, effectively preventing the disclosure of an individual sample. Differentially-private models have shown some defensive capability against privacy attacks including membership inference attacks \cite{RahmanRLM18,Sablayrolles2019WhiteboxVB,yu2021large}, gradient matching attack \cite{Zhu2019DeepLF}, input reconstruction attack \cite{Carlini2019secret}, etc.

Among the DP mechanisms, Differentially-Private Stochastic Gradient Descent (DPSGD) \cite{abadi2016deep} emerges as an important one in privacy-preserving machine learning. By clipping the gradients of each example and inserting randomized noise to the clipped gradients in each training iteration, DPSGD ensures that the published model satisfies DP over the private training dataset.  However, the conventional methods suffer significant accuracy losses due to the overwhelming amount of noise inserted, especially on deep and large models. Traditional DPSGD techniques, while effective in lower dimensions, struggle to maintain data utility as the model size enlarges. 

Many works have been devoted into tackling the high-dimensional issue in DPSGD. Some have observed that the gradients of deep neural networks mostly live on a very low dimensional manifold \cite{vogels2019powersgd,goo2020lowrank,li2022low}, and thus can be compressed without hurting the model performance too much. Based on the observation, a series of works \cite{yu2021large,Huanyu2021Wide,yu2021not,zhou2021bypassing,yu2022differentially} propose to replace the full gradients with their low-dimensional projection to reduce the overall noise added. Despite their impressive performance, most works avoid paying privacy price for the acquisition of low-dimensional representations, either by modifying the original model structure, or relying on auxiliary public data to estimate the low-dimensional projecting subspace. These works hardly keep the original training on large models as it is. Other works improve the accuracy of DPSGD by hyperparameters tuning \cite{de2022unlocking,li2022large}. However, hyperparameters tuning consumes a great amount of computational resource in large-scale training. Moreover, such solutions merely provide training tricks which may not be transferable to other models. 

We take a drastically different approach to mitigate the accuracy loss of DPSGD on large models --- rather than blaming on the high dimensionality, we take advantage of it. As we observe, for large models, the possible number of weights permutations could be huge, \textit{e.g.}, the permutation space for a billion-level model such as GPT-2 is as large as $10^9 !$. Hence if we could randomly permute the model weights without affecting the model utility at each training step, we would obscure the training trajectory thus gaining privacy with no utility cost, alleviating the amount of additive perturbation noise. 

One should note that there is a significant distinction between our methodology and the existing shuffle model of privacy \cite{bittau2017prochlo,Erlingsson2019Amplification,cheu2019distributed,balle2019privacy,Kairouz2021Practical,feldman2021hiding}. The latter is a distributed model in which a shuffler permutes the locally-differentially-private reports from different \textit{users} for obfuscation. The shuffled reports are then released to a server for analysis. It has been shown that the random shuffling of inputs to locally private protocols amplifies the privacy guarantee \cite{feldman2022hiding}. In contrast, our approach works in the central DP scenario focusing on the shuffling of \textit{model weights} instead of \textit{user data} to hide the original weights among a set of weights which are equivalent in computing the output. The random shuffling enhances privacy as it obfuscates the training trajectories on the dataset hereby preserving training data privacy at a low accuracy cost. Moreover, we confront a unique challenge: the existing shuffle model of privacy naturally preserves accuracy as the user shuffling would not affect the computation, while a naive random shuffling of model weights would totally destroy the model utility.



Fortunately, as we analyzed the mainstream neural architecture, we found that multilayer perceptron (MLP) and Transformer encoder blocks, both taking a large proportion of the model weights, satisfy permutation invariance property. Specifically, if their weight matrices are permuted, the permutation can be negated by an inverted permutation in the following layer without disrupting the forward and backward propagation. Hence weight permutation incurs zero accuracy loss on the inference or training of the model. Apart from that, MLP and Transformer encoder blocks widely exist in the state-of-the-art neural networks, especially large models with Transformer backbones, and these parts of a network are mostly dense, creating a huge space for permutation without any accuracy loss.


The primal challenge hence lies in \textit{quantifying the privacy guarantee brought by random shuffling of model weights}. According to previous study \cite{wang2024unified}, random shuffling introduces mixture distributions of which the tradeoff function is lower bounded by the unshuffled case, indicating higher  indistinguishability of the former. However, the former works do not provide an explicit form of the privacy guarantee achieved by shuffling. To vividly explain why mixture distributions increase indistinguishability, we will show a toy example by mixture of Gaussians.

To better illustrate our intuition, we employ the Gaussian distribution as a toy example in Fig.~\ref{fig:toy_example}. We have two points $(-2,0)$ and $(2,0)$ as the original data, the additive Gaussian mechanism generates outputs following distribution $\mathcal{N}((-2,0), I)$ and $\mathcal{N}((2,0), I)$ (left panel), respectively, providing plausible deniability in discriminating the two points. The closer the distance between the two distributions, the harder it is to discriminate. What shuffling does is to draw the two distributions closer by randomly permuting the data, i.e., the original $(2,0)$ has a half of the chance to be revealed as $(0,2)$. With the noise, the distribution $\mathcal{N}((2,0), I)$ morphs into a Gaussian mixture centered at $(2,0)$ and $(0,2)$, and a Gaussian mixture is created for $\mathcal{N}((-2,0), I)$ likewise. The measured distribution-wise distance is shorter between the Gaussian mixtures than that of the two original Gaussians, thereby enhancing the indistinguishability. 

\begin{figure}
	\centering
	\includegraphics[width=0.95\linewidth]{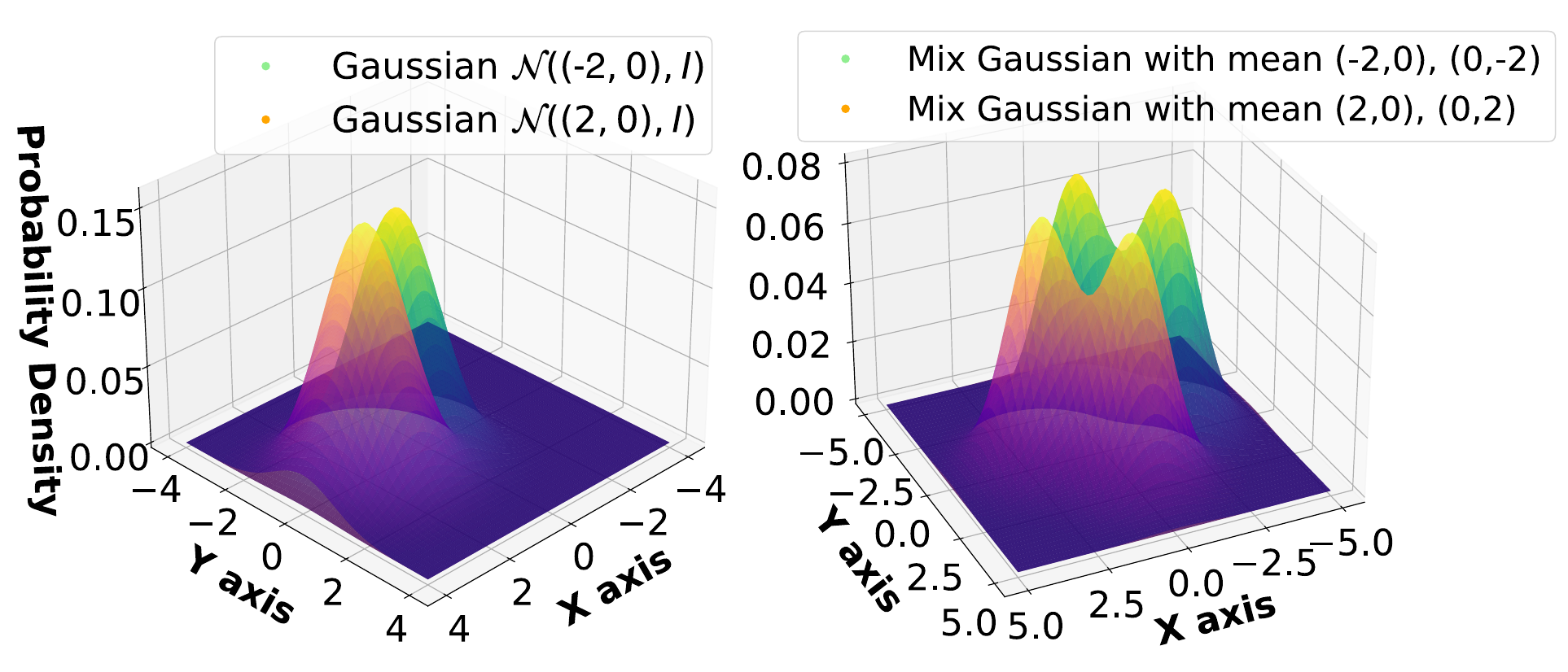}
	\caption{Left: two Gaussian distributions. Right: the distributions after random shuffling. The $\mathcal{N}((-2,0), I)$ turns into a Gaussian mixture distributed around $(-2,0)$ and $(0,-2)$ and similarly for $\mathcal{N}((2,0), I)$. We estimate the distributional distances by $\|P(x,y) - Q(x,y))\|_F$, where $\|\cdot\|_F$ means Frobenius norm, on an area $ x,y \in [-10, 10]$. The measured distance on the left panel is 18.53 which is larger than 13.96 of the right panel.}
	\label{fig:toy_example}
\end{figure}

Yet, we encounter another challenge in accounting the privacy loss of the Gaussian mixtures which contain sum of log-normal distributions without a closed-form expression. We tackle the challenge by approximating the sum of log-normal distributions and provide an analytical condition to meet $(\varepsilon, \delta)$-DP. Based on the condition, we propose the \textit{shuffled DPSGD} mechanism, which takes one more step in each iteration of DPSGD: the gradients and weights of MLP and Transformer encoder blocks get randomly shuffled in each update. We compose the privacy loss over iterations by the advanced composition and amplify privacy by subsampling \cite{Steinke2022CompositionOD}, as other accountant methods including moments accountant \cite{abadi2016deep}, RDP \cite{mironov2019renyi}, GDP \cite{dong2021gaussian} have limitations on noise distributions apart from Gaussian.

We verify the permutation invariance property, and compare the performance of shuffled DPSGD with baselines on a variety of the state-of-the-art (large) models and tasks. We also audit our mechanism by the current auditing tool, showing that it indeed meets the privacy guarantee proposed. Experimental results demonstrate that the shuffled DPSGD leads to superior model accuracy to baselines at the same privacy level, especially when the privacy budget is tight (low $\varepsilon$s). 
From the implementation perspective, our shuffling mechanism incurs little overhead,  readily to be deployed in the current DPSGD framework.

Highlights of our contribution are as follows. First, we theoretically prove that permutation of the outcome enhances the privacy guarantee, and propose the approximated condition for $(\epsilon, \delta)$-DP to hold in the shuffling case. Second, based on our theory and the permutation invariance property of neural networks, we propose \textit{shuffled DPSGD} which improves the privacy guarantee over the conventional DPSGD. Finally, we verify our theory through experiments and show the efficacy of our design in the real-world large model private training.

\section{Related work}
 Differentially-Private Stochastic Gradient Descent (DPSGD) is a method for protecting training data using differential privacy in machine learning \cite{song2013stochastic,abadi2016deep}. We will discuss the related works w.r.t. DPSGD.

\textbf{DPSGD for natural language processing (NLP) tasks.} Applying DPSGD to large models is hard in that the high dimensionality of the model often results in excessive amount of noise, leading to reduced model performance. Consequently, existing works have explored various approaches to enhance the accuracy of large models under differential privacy protection. For instance, studies \cite{li2022large,Anil2021LargeScaleDP,hoory-etal-2021-learning,yu2022differentially,he2023exploring} have focused on NLP tasks closely associated with large models. To reduce the memory costs of DPSGD in training large models, Li et al. introduced `ghost clipping' \cite{li2022large}, which significantly saves memory by fine-tuning the weight coefficients without instantiating each sample gradient. He et al. implemented a grouped and layered clipping mechanism \cite{he2023exploring}, reducing the memory overhead of DPSGD while achieving differentially-private training of GPT-3 model (with 175B parameters). Hoory et al. designed a differentially-private vocabulary algorithm \cite{hoory-etal-2021-learning}, allowing training on a customized, domain-specific vocabulary while maintaining privacy. Yu et al. adopted classic compression methods in natural language processing (such as LoRA, Adapter, and Compacter) as `plug-ins' \cite{yu2022differentially}, efficiently perturbing the compressed module instead to avoid overwhelming noise. By appropriately configuring hyperparameters such as batch size, learning rate, etc., Anil et al. conducted private pre-training \cite{Anil2021LargeScaleDP}, improving accuracy on masked language model BERT-Large (with approximately 340M parameters). These methods alleviate the accuracy degradation issue but alter either the original training process, or the original model, or the hyperparameter setting.

\textbf{DPSGD for computer vision (CV) tasks.} Studies \cite{Kurakin2022Toward,yu2021large,Golatkar2022MixedDP,de2022unlocking,bu2022scalable} have primarily focused on large models in image processing tasks. Bu et al.'s work \cite{bu2022scalable} extended the `ghost clipping' \cite{li2022large} to convolutional neural networks (CNNs) like ResNet, reducing the computational overhead of DPSGD for CNNs. Another series of works focus on model reparametrization. Yu et al. decompose each weight matrix into the product of two low-rank matrices and a residual matrix \cite{yu2021large}, reducing the overall noise by perturbing only the gradients of the low-rank matrices. The noisy low-rank matrix gradients are then projected back to update the original weight matrices. However, this method still incurs high costs in weight reparametrization, and the noise could be amplified in projection, leading to significant accuracy losses. Golatkar et al. proposed an adaptive differential privacy algorithm \cite{Golatkar2022MixedDP}, balancing data privacy and model accuracy by compressing private data gradients guided by public data gradient directions. Although such model compression-based works have improved efficiency/accuracy to some extent, they have deviated from the original large model training. In terms of hyperparameter configuration, De et al. found that the choice of hyperparameters in DPSGD is crucial to the training outcome \cite{de2022unlocking} that, by selecting proper hyperparameters, their work achieves over 80\% accuracy on the fine-tuned pre-trained NFNet-F3 model on ImageNet. Similarly, Kurakin et al. improve accuracy of the deep ResNet model on ImageNet \cite{Kurakin2022Toward} by tuning hyperparameters.

\textbf{Shuffle-DP.} 
Since the proposal of the Encode, Shuffle, Analyze (ESA) system \cite{bittau2017prochlo}, the previous line of Shuffle-DP \cite{Erlingsson2019Amplification,cheu2019distributed,balle2019privacy} has mainly focused on reducing the privacy budget of local differential privacy, where private data from different users gets shuffled by a trusted third-party aggregator, adding anonymity upon the original DP protection. That is to say, even in cases where the true value is uploaded, the adversary cannot pinpoint which one is the true value. Specifically, the anonymity brought by shuffling turns the original $\varepsilon$-DP into $(\varepsilon^{\prime}, \delta)$-DP with $\varepsilon > \varepsilon^{\prime}$, and such a privacy loss bound is further refined by works such as \cite{feldman2021hiding}.

Our work follows a similar idea but takes a totally different approach. While Shuffle-DP primarily targets LDP, our work is designed for DPSGD and does not require a trusted third-party aggregator. We permute the model weights instead of shuffling the dataset to get different sets of samples \cite{Kairouz2021Practical}. Moreover, the permutation brings additional computational issue in model training scenarios.


\section{Preliminaries}
This section introduces the basic concepts and definitions fundamental to our study. 

\subsection{Differential Privacy}
Differential Privacy (DP) is a framework for quantifying the privacy guarantees offered by a randomized mechanism. The definition of $(\epsilon, \delta)$-differential privacy provides a measure of the likelihood that an individual's data can influence the output of an algorithm.

\begin{definition}[\textbf{$(\varepsilon,\delta)$-Differential Privacy} \cite{dwork2006calibrating}]
 \label{def:dp}
 A randomized mechanism $M$ satisfies $ (\varepsilon,\delta) $-differential privacy if for any neighboring datasets $ {X} $ and $ {X}^{\prime} $ differing by at most one unit, and for any possible output $\mathcal{O}$,
 \begin{equation} \label{eq:dpeq}
 \Pr (\mathcal{M}({X}) \in \mathcal{O}) \le e^{\varepsilon} \Pr (\mathcal{M}({X}^{\prime}) \in \mathcal{O} ) +\delta.
 \end{equation}
\end{definition}
In the special case of $ \delta = 0 $, we call $ \mathcal{M} ~ \varepsilon $-differentially private.

We define $ l_{2} $-sensitivity on a pair of adjacent datasets which differ by a single record:
\begin{definition}[$ \ell_{2} $-sensitivity]
	\label{def:l2sensitivity}
	The $ \ell_{2} $-sensitivity of the query function $ f({X})$ is defined as
	$$
	s_{2}(f) = \underset{d({X},{X}^{\prime}) = 1 }{\sup} \| f({X}) - f({X}^{\prime}) \|_{2},
	$$
	where $ \| \cdot \|_{2} $ is the $\ell_{2}$ norm. 
\end{definition}

\textbf{DPSGD.} In a deep learning task, the sensitive training dataset $X=[x_1,x_2,\ldots,x_N ]$ requires to be protected in $ T $ iterations of stochastic gradient descent. In each iteration, a batch of data $B$ of size $|B|$ will be randomly selected to compute the gradient for weights $W: g=1/|B| \sum_{x\in B} g(W,x)$, where $g(W,x)$ represents the gradient of the individual record $x$ and the single $g$ represents the average gradient of the batch. Since DP requires that the sensitivity of the outcome is bounded, conventional DPSGD conducts per-sample clipping on the gradients of each $x$:
\begin{equation}
	\bar{g}(W,x)=g(W,x)/ \max(1,\frac{\|g(W,x)\|_2}{C}),
\end{equation} 
to ensure that the sensitivity of the gradient, or clipping value is $C$, and thus the clipped batch gradient becomes $\bar{g}=1/|B| \sum_{x\in B}\bar{g}(W,x)$. DPSGD mechanism $\mathcal{M}$ inserts noise to the batch gradient: $\mathcal{M}(\bar{g} )=\bar{g}+ \frac{1}{|B|}z,$ where $  z \sim \mathcal{N}(0,\sigma^2 C^2 I) $ has the same shape with $\bar{g}$. The standard deviation $ \sigma $ of the noise is a constant decided by the privacy budget $(\epsilon,\delta)$ and $C$.

\subsection{Model Structure}
Here we give a detailed description of the common layer structures in large models, including MLP and Transformer encoder blocks.

\begin{definition}\label{def:linear_layer}
  We define MLP with two-layer linear projection as 
    \begin{align}
        & x_{t+1} = h(x_{t} W_{t}^{\top} + b_{t}), \label{eq:lineareq1}\\
        & x_{t+2} = h(x_{t+1} W_{t+1}^{\top} + b_{t+1}), \label{eq:lineareq2}
    \end{align}
    where $W_t$ and $b_t$ is the weight and bias of $t$-th linear layer, $W_{t+1}$ is the weight and bias of $(t+1)$-th linear layer and $h(\cdot)$ is the activation function. 
\end{definition}

\begin{definition}\label{def:multihead_attention}
Standard transformer encoder block \cite{Vaswani2017AttentionIA} is a popular building block for neural architectures, which contains multi-head attention and a MLP layer:
\begin{align}
&\text{Attention}(Q, K, V)=\operatorname{softmax}\left(\frac{Q K^T}{\sqrt{d_k}}\right) V, \\
    &X_{t+1}=\left[ A_1, \ldots, A_h \right] W^O, \\
    &\text { where } A_{i} = \text{Attention} \left(X_t W_i^Q, X_t W_i^K, X_t W_i^V\right),~i \in [h].
\end{align}
The projections are parameter matrices $W_i^Q \in \mathbb{R}^{d_{m} \times d_k}, W_i^K \in \mathbb{R}^{d_{m} \times d_k},$ $ W_i^V \in \mathbb{R}^{d_{m} \times d_v}$ and $W^O \in \mathbb{R}^{h d_v \times d_{m}}$. $W^O$ means the output weight matrix, and $d_m, d_k, d_v$ are the dimensions of the model, key, and value, respectively.
\end{definition}
We will prove the parameter permutation invariance under the shuffle mechanism on these two types of layers.

\subsection{Sum of Log-Normal Distribution}
\begin{lemma}[F-W approximation of sum of lognormal distributions \cite{Fenton1960}]\label{lemma:lognormal}
Given $d$ i.i.d. normal distributions $Y_i ~\sim \mathcal{N}(\mu_i, \sigma^2)$ for $i=1, \ldots, d$. The sum of $d$ log-normal random variables $S_d = \sum\limits_{i=1}^d e^{Y_i}$ can be approximated by:
\begin{equation}
    S_d \approx e^Y,~ \text{where}~ Y \sim \mathcal{N}(\mu_Y, \sigma_Y^2),
\end{equation} 
given that
\begin{align}
\sigma_Y^2& = \log \left[(e^{\sigma^2}-1) \frac{\sum\limits_{i=1}^d e^{2\mu_i}}{(\sum\limits_{i=1}^d e^{\mu_i})^2} + 1 \right],\\
\mu_Y &= \log \left[\sum\limits_{i=1}^d e^{\mu_i}\right] + \frac{\sigma^2}{2} - \frac{\sigma_Y^2}{2}.
\end{align}
\end{lemma}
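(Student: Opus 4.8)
The plan is to read the Fenton--Wilkinson statement as a moment-matching construction: a single lognormal variable has exactly two free parameters, so I would fix $\mu_Y$ and $\sigma_Y^2$ by demanding that $e^Y$ share its first two moments with $S_d$, and then simply read off the stated formulas. The only external facts needed are the moments of a lognormal: if $Z \sim \mathcal{N}(m,s^2)$ then $E[e^{tZ}] = e^{tm + t^2 s^2/2}$, so in particular $E[e^{Z}] = e^{m + s^2/2}$ and $E[e^{2Z}] = e^{2m + 2s^2}$. Note that only independence of the $Y_i$ (not identical distribution, which is impossible when the $\mu_i$ differ) is actually used.

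First I would compute the two moments of $S_d = \sum_{i=1}^d e^{Y_i}$. Linearity gives $E[S_d] = \sum_i e^{\mu_i + \sigma^2/2} = e^{\sigma^2/2}\sum_i e^{\mu_i}$. For the second moment I would expand the square and split into diagonal and off-diagonal terms, using independence on the latter: $E[S_d^2] = \sum_i E[e^{2Y_i}] + \sum_{i\ne j} E[e^{Y_i}]E[e^{Y_j}] = e^{2\sigma^2}\sum_i e^{2\mu_i} + e^{\sigma^2}\sum_{i\ne j} e^{\mu_i+\mu_j}$. Substituting $\sum_{i\ne j} e^{\mu_i+\mu_j} = (\sum_i e^{\mu_i})^2 - \sum_i e^{2\mu_i}$ and regrouping yields $E[S_d^2] = e^{\sigma^2}(\sum_i e^{\mu_i})^2 + e^{\sigma^2}(e^{\sigma^2}-1)\sum_i e^{2\mu_i}$.

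Next I would impose the matching conditions $e^{\mu_Y + \sigma_Y^2/2} = E[S_d]$ and $e^{2\mu_Y + 2\sigma_Y^2} = E[S_d^2]$. Dividing the second by the square of the first cancels $\mu_Y$ and leaves $e^{\sigma_Y^2} = E[S_d^2]/(E[S_d])^2$; plugging in the expressions above, the common factor $e^{\sigma^2}$ cancels and $e^{\sigma_Y^2} = 1 + (e^{\sigma^2}-1)\,\frac{\sum_i e^{2\mu_i}}{(\sum_i e^{\mu_i})^2}$, which after taking logarithms is exactly the claimed $\sigma_Y^2$. Back-substituting into $\mu_Y = \log E[S_d] - \sigma_Y^2/2 = \log(\sum_i e^{\mu_i}) + \sigma^2/2 - \sigma_Y^2/2$ recovers the stated $\mu_Y$.

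The algebra here is routine; the point worth stating carefully is what is \emph{not} being proved. The symbol ``$\approx$'' is not a convergence or quantitative-error claim but a definitional choice: $e^Y$ is the unique lognormal whose mean and variance coincide with those of $S_d$, and the fact that this happens to be a good proxy for the true (closed-form-free) distribution of $S_d$ --- accurate when the $\mu_i$ are clustered and $\sigma$ is moderate, less so in the tails --- is an empirical observation due to \cite{Fenton1960} rather than something I would establish. I would therefore present the lemma as this moment-matching derivation and defer the accuracy discussion to \cite{Fenton1960}, since the downstream privacy accounting only needs the resulting lognormal surrogate and its parameters.
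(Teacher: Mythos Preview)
Your moment-matching derivation is correct and is precisely the classical Fenton--Wilkinson construction. Note, however, that the paper does not supply a proof of this lemma at all: it is quoted in the Preliminaries as a known result from \cite{Fenton1960} and used downstream as a black-box approximation, so there is no in-paper argument to compare against. Your write-up (including the observation that only independence, not identical distribution, is used, and the caveat that ``$\approx$'' is a definitional surrogate rather than a quantified bound) is exactly the justification one would give if asked to unpack the citation.
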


\subsection{Privacy Accountant}
To track the privacy loss in DPSGD, Moments Accountant \cite{abadi2016deep} and Renyi Differential Privacy Accountant \cite{mironov2019renyi} are commonly used. These methods accumulate the logarithm of the moments of privacy loss at each step, but have the drawback of overestimating the privacy budget \cite{pld1}. 
Therefore, Sivakan et al. proposed the Privacy loss Random Variables (PRVs) Accountant in \cite{PRVAccountant}, outperforming other accountant schemes in the context of DPSGD. This method truncates and discretizes the Privacy Loss Random Variables (PRVs), and then uses Fast Fourier Transform (FFT) to efficiently convolve the distributions of the PRVs. Adapted from the PLD Accountant \cite{pld1, pld2, pld3}, the PRV Accountant solves the problem in PLD Accountant that it cannot compose differentially-private algorithms. However, the PRV Accountant is still limited in that it cannot be applied to random variables of any distribution. Taking generalizability and tightness of the bound into account, we choose advanced composition \cite{Steinke2022CompositionOD} in this work:
\begin{theorem}[Advanced Composition, Thm.~22 in \cite{Steinke2022CompositionOD}]\label{thm:advanced_composition}
	For $j \in[k]$, letting $M_j: \mathcal{X}^n \times \mathcal{Y}_{j-1} \rightarrow \mathcal{Y}_j$ be the randomized algorithm which is $\left(\varepsilon_j, \delta_j\right)$-DP, we inductively define $M_{1 \cdots j}: \mathcal{X}^n  \rightarrow \mathcal{Y}_j$ by $M_{1 \cdots j}(x)=M_j\left(x, M_{1 \cdots(j-1)}(x)\right)$, where $M_{1 \ldots 0}(x)=y_0$ for some fixed $y_0 \in \mathcal{Y}_0$. Then $M_{1 \ldots k}$ is $(\varepsilon, \delta)$-DP for any $\delta>\sum_{j=1}^k \delta_j$ with
\begin{equation}
	\varepsilon=\min \left\{\sum_{j=1}^k \varepsilon_j, \frac{1}{2} \sum_{j=1}^k \varepsilon_j^2+\sqrt{2 \log \left(1 / \delta^{\prime}\right) \sum_{j=1}^k \varepsilon_j^2}\right\},
\end{equation}
where $\delta^{\prime}=\delta-\sum_{j=1}^k \delta_j$.
\end{theorem}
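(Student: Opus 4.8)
\emph{Proof proposal.} The plan is to bound the \emph{privacy loss random variable} of the composed mechanism $M_{1\cdots k}$ and show that it stays below $\varepsilon$ except on an event of probability at most $\delta$. Fix neighboring datasets $X\sim X'$ and, for each $j$, let $P_j,Q_j$ denote the conditional output distributions of $M_j$ given the prefix $y_{1},\dots,y_{j-1}$ under $X$ and $X'$ respectively; this conditioning is legitimate precisely because the composition is adaptive, $M_{1\cdots j}(X)=M_j(X,M_{1\cdots(j-1)}(X))$. The first branch $\sum_j\varepsilon_j$ of the $\min$ is just the basic composition theorem and holds unconditionally, so the real work is to establish the second branch.

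I would first reduce to pure DP. Each $M_j$ being $(\varepsilon_j,\delta_j)$-DP admits, by the standard privacy-loss (hockey-stick) characterization, a decomposition into a pure $\varepsilon_j$-DP behavior and a failure event $E_j$ with $\Pr[E_j]\le\delta_j$; off $E_j$ the one-step loss $L_j=\log\frac{dP_j}{dQ_j}(y_j)$ satisfies $|L_j|\le\varepsilon_j$ (and symmetrically with $P_j,Q_j$ swapped). A union bound gives $\Pr[\bigcup_j E_j]\le\sum_j\delta_j$, and I condition on its complement, which accounts for the $\delta-\sum_j\delta_j=\delta'$ appearing in the statement. On this good event the chain rule for densities makes the total loss telescope, $L=\sum_{j=1}^k L_j$, with each $L_j\in[-\varepsilon_j,\varepsilon_j]$ given the prefix $y_{<j}$.

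Two estimates then drive the concentration. (i) A mean bound $\mathbb{E}[L_j\mid y_{<j}]=\mathrm{KL}(P_j\|Q_j)\le\varepsilon_j\tanh(\varepsilon_j/2)\le\frac{1}{2}\varepsilon_j^2$, which follows from $|L_j|\le\varepsilon_j$ together with the normalization $\mathbb{E}_{Q_j}[e^{L_j}]=1$ (the extremal case being a two-point distribution, for which this is tight). (ii) Hoeffding's lemma applied to the centered variable $L_j-\mathbb{E}[L_j\mid y_{<j}]$, whose range has length $\le 2\varepsilon_j$, gives the conditional bound $\mathbb{E}[e^{\lambda(L_j-\mathbb{E}[L_j\mid y_{<j}])}\mid y_{<j}]\le e^{\lambda^2\varepsilon_j^2/2}$. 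Since the centered terms form a martingale difference sequence, iterating this conditional moment-generating bound and optimizing $\lambda$ (Azuma) yields $\Pr[L>\frac{1}{2}\sum_j\varepsilon_j^2+t]\le\exp(-t^2/(2\sum_j\varepsilon_j^2))$. Choosing $t=\sqrt{2\log(1/\delta')\sum_j\varepsilon_j^2}$ makes the right-hand side equal $\delta'$, so with probability $\ge 1-\delta'$ on the good event one has $L\le\frac{1}{2}\sum_j\varepsilon_j^2+\sqrt{2\log(1/\delta')\sum_j\varepsilon_j^2}=\varepsilon$. Finally, ``$L\le\varepsilon$ except with total probability $\le\sum_j\delta_j+\delta'=\delta$'' is exactly the standard sufficient condition for $M_{1\cdots k}$ to be $(\varepsilon,\delta)$-DP, which closes the argument.

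The main obstacle is obtaining the sharp constants $\frac{1}{2}$ and $\sqrt{2}$ rather than merely the right order of magnitude: bounding $|L_j|\le\varepsilon_j$ and applying a plain Chernoff bound would give a variance proxy of $4\sum_j\varepsilon_j^2$ and a mean term of order $\varepsilon_j(e^{\varepsilon_j}-1)$, both looser than claimed. Recovering the stated form needs (a) the refined mean bound, which genuinely uses the normalization $\mathbb{E}_{Q_j}[e^{L_j}]=1$ and not just boundedness, and (b) Hoeffding's lemma with its optimal subgaussian constant $(b-a)^2/8$. Equivalently, one can route exactly these two ingredients through concentrated DP: $\varepsilon_j$-DP implies $\frac{1}{2}\varepsilon_j^2$-zCDP, zCDP composes additively to $\frac{1}{2}\sum_j\varepsilon_j^2$-zCDP, and the standard zCDP-to-$(\varepsilon,\delta)$-DP conversion reproduces the same $\varepsilon$; in that route the only remaining care is the handling of the $\delta_j$'s, i.e.\ the approximate-zCDP / bad-event bookkeeping from the second step above.
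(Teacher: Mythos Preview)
The paper does not prove this theorem at all: it is quoted verbatim as Theorem~22 of the cited survey \cite{Steinke2022CompositionOD} in the preliminaries section and used as a black box for the privacy accounting of shuffled DPSGD. So there is no ``paper's own proof'' to compare against.

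That said, your sketch is a faithful outline of the standard argument (essentially the Dwork--Rothblum--Vadhan proof with the sharpened constants, or equivalently the zCDP route you mention at the end). The two places where you correctly flag the need for care --- the mean bound $\mathbb{E}[L_j\mid y_{<j}]\le\varepsilon_j\tanh(\varepsilon_j/2)\le\tfrac{1}{2}\varepsilon_j^2$ via the two-point extremal, and Hoeffding's lemma with the optimal $(b-a)^2/8$ constant --- are exactly what the cited source uses to obtain the stated $\tfrac{1}{2}$ and $\sqrt{2}$, and your handling of the $\delta_j$'s via the bad-event decomposition and union bound is the standard bookkeeping. Nothing is missing for a proof at the level of a survey citation.
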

This Theorem provides a general composition method that can be applied to any differential privacy mechanism. It is not limited to any specific distribution, and thus we apply it in our proposed method to calculate the privacy budget. 

Moreover, we apply the Privacy Amplification by Subsampling (Thm.~29 in \cite{Steinke2022CompositionOD}) to amplify the privacy budget by batch sampling:
\begin{theorem}[Privacy Amplification by Subsampling]\label{thm:subsample}
     For dataset $x \in \mathcal{X}^n$, let $x_U \in \mathcal{X}^n$ denote the entries of $x$ indexed by $U \subset[n]$ which is a random subset. That is, $\left(x_U\right)_i=x_i$ if $i \in U$ and $\left(x_U\right)_i=\perp$ if $i \notin U$, where $\perp \in \mathcal{X}$ is some null value. Given $M: \mathcal{X}^n \rightarrow \mathcal{Y}$ satisfying $(\varepsilon, \delta)$-DP, we define $M^U: \mathcal{X}^n \rightarrow \mathcal{Y}$ by $M^U(x)=M\left(x_U\right)$.
     Let $p=\max _{i \in[n]} \underset{U}{\mathbb{P}}[i \in U]$. Then $M^U$ is $\left(\varepsilon^{\prime}, \delta^{\prime}\right)$-DP for $\varepsilon^{\prime}=\log \left(1+p\left(e^{\varepsilon}-1\right)\right)$ and $\delta^{\prime}=p \cdot \delta$.
 \end{theorem}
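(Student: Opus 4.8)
\textbf{Proof proposal for Theorem~\ref{thm:subsample}.} Throughout write $P\approx_{(\varepsilon,\delta)}Q$ to mean that $P(\mathcal{O})\le e^{\varepsilon}Q(\mathcal{O})+\delta$ and $Q(\mathcal{O})\le e^{\varepsilon}P(\mathcal{O})+\delta$ for every measurable $\mathcal{O}$, so that $(\varepsilon,\delta)$-DP of a mechanism is exactly $\mathrm{law}(M(x))\approx_{(\varepsilon,\delta)}\mathrm{law}(M(x'))$ for all neighbors $x,x'$. The plan is to reduce the statement to an elementary ``a mixture amplifies indistinguishability'' lemma after splitting the randomness of the subsample on whether it touches the differing record. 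Fix neighbors $x,x'$ differing only at an index $i^{*}$; since the neighboring relation is carried by the null symbol $\perp$, we may assume without loss of generality that $x'$ is obtained from $x$ by filling position $i^{*}$ (which is $\perp$ in $x$) with a genuine record, the reverse orientation being identical by symmetry. Condition on the event $A=\{i^{*}\in U\}$, with $\Pr[A]=:p^{*}\le p$. Because $x_{i^{*}}=\perp$, the masked dataset $x_{U}$ never actually contains record $i^{*}$: writing $\mathcal{D}_{0}$ for the law of $M$ applied to a subsample of $x$ that omits $i^{*}$, and $\mathcal{D}_{0}'$ for the law of $M$ applied to a subsample of $x$ drawn from the law of $U$ conditioned on $i^{*}\in U$ (with position $i^{*}$ blanked), one gets $\mathrm{law}(M^{U}(x))=p^{*}\mathcal{D}_{0}'+(1-p^{*})\mathcal{D}_{0}$ and $\mathrm{law}(M^{U}(x'))=p^{*}\mathcal{D}_{1}+(1-p^{*})\mathcal{D}_{0}$, where $\mathcal{D}_{1}$ is the law of $M$ on a subsample of $x'$ conditioned on including $i^{*}$. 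For every fixed subsample containing $i^{*}$ the datasets feeding $\mathcal{D}_{1}$ and $\mathcal{D}_{0}'$ are neighbors, so averaging the $(\varepsilon,\delta)$-DP guarantee of $M$ gives $\mathcal{D}_{1}\approx_{(\varepsilon,\delta)}\mathcal{D}_{0}'$; and when inclusion of $i^{*}$ is independent of the rest of the draw (as for the Poisson subsampling used in DPSGD) one has $\mathcal{D}_{0}'=\mathcal{D}_{0}$, whence $\mathrm{law}(M^{U}(x))=\mathcal{D}_{0}$ and $\mathcal{D}_{1}\approx_{(\varepsilon,\delta)}\mathcal{D}_{0}$.

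The second step is the mixture lemma: if $P\approx_{(\varepsilon,\delta)}Q$ then, with $\varepsilon'=\log(1+p(e^{\varepsilon}-1))$ and $\delta'=p\delta$, both $Q\approx_{(\varepsilon',\delta')}\bigl(pP+(1-p)Q\bigr)$ and $\bigl(pP+(1-p)Q\bigr)\approx_{(\varepsilon',\delta')}Q$. I would prove this by checking, for an arbitrary event $\mathcal{O}$, the two numeric inequalities
\[
Q(\mathcal{O})\le e^{\varepsilon'}\bigl[pP(\mathcal{O})+(1-p)Q(\mathcal{O})\bigr]+p\delta,\qquad pP(\mathcal{O})+(1-p)Q(\mathcal{O})\le e^{\varepsilon'}Q(\mathcal{O})+p\delta,
\]
together with the two obtained by swapping $P$ and $Q$. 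The second inequality is just $P(\mathcal{O})\le e^{\varepsilon}Q(\mathcal{O})+\delta$ rearranged, using the identity $e^{\varepsilon'}-(1-p)=pe^{\varepsilon}$. For the first, split on the sign of $Q(\mathcal{O})-P(\mathcal{O})$: if it is non-positive the inequality is immediate because $e^{\varepsilon'}\ge 1$ and the mixture already dominates $Q(\mathcal{O})$; if it is positive, substituting $Q(\mathcal{O})\le e^{\varepsilon}P(\mathcal{O})+\delta$ and expanding leaves a residual that is a non-negative multiple of $Q(\mathcal{O})-P(\mathcal{O})$, which is exactly what pins the constants to $e^{\varepsilon'}$ and $p\delta$. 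Combining step one (with $P=\mathcal{D}_{1}$, $Q=\mathcal{D}_{0}$, mixing weight $p^{*}\le p$) with step two, and using monotonicity of $t\mapsto 1+t(e^{\varepsilon}-1)$ to replace $p^{*}$ by $p$, yields $M^{U}(x)\approx_{(\varepsilon',\delta')}M^{U}(x')$, i.e.\ $(\varepsilon',\delta')$-DP with the claimed $\varepsilon'$ and $\delta'$.

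The step I expect to be the real obstacle is the bookkeeping of the mixing weight and the additive term for a general random subset $U$. The clean identity $\mathcal{D}_{0}'=\mathcal{D}_{0}$ (equivalently $\mathrm{law}(M^{U}(x))=\mathcal{D}_{0}$) relies on inclusion of $i^{*}$ being independent of the remaining draws, which holds for Poisson subsampling but not for, e.g., fixed-size sampling without replacement. For the fully general statement one must instead argue through couplings between subsamples of nearly equal size so that $\mathcal{D}_{1}\approx_{(\varepsilon,\delta)}\mathcal{D}_{0}'$ and the residual terms still feed the same mixture lemma, and one must verify that $p=\max_{i}\Pr[i\in U]$ — rather than some conditional inclusion probability — is what legitimately enters both the exponent $\log(1+p(e^{\varepsilon}-1))$ and the additive loss $p\delta$; this is the only place where the precise definition of $p$ in the statement is actually used.
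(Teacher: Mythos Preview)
The paper does not prove this theorem at all: it is quoted verbatim as Theorem~29 of \cite{Steinke2022CompositionOD} and used as a black box in the privacy accountant, so there is no ``paper's own proof'' to compare against.

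That said, your argument is the standard one and is essentially correct. Conditioning on whether the differing index $i^{*}$ lies in $U$, using that $x_{U}=x'_{U}$ when $i^{*}\notin U$ (because both get $\perp$ at that coordinate), and then invoking a mixture lemma of the form ``if $P\approx_{(\varepsilon,\delta)}Q$ then $pP+(1-p)Q\approx_{(\varepsilon',\delta')}Q$ with $\varepsilon'=\log(1+p(e^{\varepsilon}-1))$ and $\delta'=p\delta$'' is exactly how the cited reference proves it. Your identification of the one genuine subtlety is also on point: the clean reduction to a two-point mixture $p^{*}\mathcal{D}_{1}+(1-p^{*})\mathcal{D}_{0}$ versus $\mathcal{D}_{0}$ requires $\mathcal{D}_{0}'=\mathcal{D}_{0}$, i.e.\ that inclusion of $i^{*}$ is independent of the rest of $U$. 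For Poisson subsampling (which is what the paper actually uses in Alg.~\ref{alg: Shuffle Mechanism in DPSGD}, sampling each example independently with probability $|B|/N$) this holds, and your proof goes through verbatim with $p^{*}=p$. For the fully general statement with an arbitrary random $U$ and $p=\max_{i}\Pr[i\in U]$, one does need the extra coupling/convexity step you flag in your last paragraph; the cited Steinke notes handle this, but the paper itself never needs that generality.
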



This theorem could be applied to amplify the privacy budget of each step in DPSGD, where the random subset $U$ represents the batch sampled.

\section{Methodology}

This section delineates our approach. We begin with the motivation of using shuffling to reinforce privacy, illustrate our shuffling mechanism, and then provide the differential privacy guarantee of shuffled Gaussian --- a Gaussian mechanism with post shuffling.

\textbf{Motivation.}
From the entropy perspective, shuffling would increase the ambiguity of the outcome, thereby enhancing the privacy strength of the input. Specifically, shuffling increases the randomness in data processing by rearranging elements of the (noisy) output, which disrupts the potential relation between the output and its private input, therefore reducing reliance on the randomness of the additive perturbation noise. The result is a relaxed constraint on the perturbation noise at the same level of privacy guarantee.

We take the Gaussian mechanism as an example for a deeper analysis. A Gaussian mechanism composed by random shuffling is interpreted as a mixture of Gaussian, where each shuffled noisy output appears with a certain probability. Random shuffling turns a single-spike (Gaussian) distribution into a multi-spike (mixture of Gaussian) one where each spike has a reduced magnitude compared to the single spike. Thus the transformation from a Gaussian into a mixture of Gaussian can be considered as distribution smoothing, making adjacent distribution pairs harder to distinguish.

Meanwhile, it is also crucial to prevent shuffling to affect the actual SGD procedures: no accuracy price should be paid for shuffling. We will show how we achieve this in the following section.



\subsection{The Shuffling Mechanism}

Although the shuffling of the outcome would bring additional randomness, we need to carefully implement it to avoid any model performance decline. Inspired by the permutation equivariance property \cite{xu2024Permutation}, we propose a weight shuffling mechanism that does not alter the forward and backward propagation. Our definition of permutation invariance is as follows:


\begin{definition}[Weights Permutation Invariance]\label{def:perin}
    For a model $f(W, x)$ with weights $ W $ and input $x$, the permutation-invariant set $\mathcal{P}_s$ is defined as 
    \begin{equation}
        \mathcal{P}_s = \{ P ~| f(W, x) = f(P(W), x), \forall x \}
    \end{equation}
    where the permutation $P$ denotes a mapping from a set to itself with a permuted order. 
\end{definition}

Provided the definition, we now show two widely used network structures are permutation-invariant.


\begin{theorem}[MLP Permutation Invariance] \label{thm:linear}
    Let $W = \{W_t, W_{t+1}, b_t, b_{t+1}\}$ be the parameters of MLP defined in Def.~\ref{def:linear_layer}, and $x = x_t$ be the input to the MLP. For any element-wise activation function $h(\cdot)$ like ReLu, tanh and Softmax, MLP $f(W, x)$ is permutation-invariant in both the forward and backward propagation if the following permutation is applied:
    \begin{align}
        P(W)=\{P_{1}^{\top} W_t, ~b_t P_1,~W_{t+1} P_1 ,~ ~b_{t+1}\},
    \label{eq: linear permutation}
    \end{align}
where $P_1$ is any permutation matrix.
\end{theorem}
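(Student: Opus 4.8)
The plan is to verify the statement by direct substitution, treating the forward pass and the backward pass in turn, and relying on two elementary properties of a permutation matrix $P_1$: it is orthogonal, so $P_1 P_1^\top = P_1^\top P_1 = I$, and every coordinate-wise activation commutes with it, $h(z P_1) = h(z) P_1$ for any row vector $z$ (for $\mathrm{softmax}$ this is its permutation equivariance rather than a literal element-wise action, but the same identity still holds). These two facts are the only ingredients; everything else is bookkeeping.

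\textbf{Forward direction.} Writing the permuted parameters as $\tilde W_t = P_1^\top W_t$, $\tilde b_t = b_t P_1$, $\tilde W_{t+1} = W_{t+1} P_1$, $\tilde b_{t+1} = b_{t+1}$, I would substitute them into \eqref{eq:lineareq1}. Using $(P_1^\top W_t)^\top = W_t^\top P_1$ and then pulling $P_1$ out of $h$, the first activation becomes $\tilde x_{t+1} = h\!\left((x_t W_t^\top + b_t)P_1\right) = x_{t+1} P_1$. Feeding this into \eqref{eq:lineareq2}, the factor $P_1 (W_{t+1} P_1)^\top = P_1 P_1^\top W_{t+1}^\top = W_{t+1}^\top$ collapses by orthogonality, so $\tilde x_{t+2} = x_{t+2}$. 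Hence $f(P(W), x) = f(W, x)$ for every $x$, i.e. $P \in \mathcal P_s$.

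\textbf{Backward direction.} Since the output $x_{t+2}$ is unchanged, any loss $L$ built from it yields the same upstream gradient. Propagating it through the permuted network by the chain rule, I would check that the intermediate gradient picks up exactly the same factor $P_1$ that appeared in $\tilde x_{t+1}$ (using again $h'(u P_1) = h'(u) P_1$ for the activation's derivative), so that each parameter gradient is the correspondingly permuted copy of the original: $\nabla_{\tilde W_t} L = P_1^\top \nabla_{W_t} L$, $\nabla_{\tilde b_t} L = (\nabla_{b_t} L) P_1$, $\nabla_{\tilde W_{t+1}} L = (\nabla_{W_{t+1}} L) P_1$, $\nabla_{\tilde b_{t+1}} L = \nabla_{b_{t+1}} L$. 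Consequently a single (stochastic) gradient step from $P(W)$ lands exactly on $P(W')$, where $W'$ is the same step applied to $W$, so invariance persists along the entire training trajectory, not merely for one evaluation --- which is precisely what guarantees that shuffling costs no accuracy.

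The two substitutions are routine; the step demanding care is the backward direction, where one must track that the permutation created in the forward activations propagates consistently into every parameter gradient and cancels wherever a $P_1 P_1^\top$ meets it, so that gradient descent on the shuffled weights reproduces gradient descent on the originals rather than merely agreeing at initialization. A secondary subtlety is that $\mathrm{softmax}$ is not literally element-wise; there one invokes permutation equivariance to recover the identity $h(z P_1) = h(z) P_1$ (and the analogous identity for its Jacobian) used throughout.
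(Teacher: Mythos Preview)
Your proposal is correct and follows essentially the same route as the paper: direct substitution for the forward pass using $P_1 P_1^\top = I$ and $h(zP_1)=h(z)P_1$, then the chain rule for the backward pass to show each parameter gradient is the correspondingly permuted copy of the original. Your treatment is in fact slightly more careful than the paper's --- you explicitly flag the softmax permutation-equivariance subtlety and spell out that a gradient step from $P(W)$ lands on $P(W')$, whereas the paper simply writes expressions like $\frac{\partial W_{t+1}}{\partial W_{t+1}P_1}$ and concludes the gradients share the permutation order.
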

The proof can be obtained by calculation, and we place it in Appendix~\ref{proof:theorem1}.

\begin{theorem}[Transformer Permutation Invariance] \label{thm:attention}
    Let $W = \{W_i^K, W_i^Q, W_i^{V}, W^{O}\}$ for $i \in [h]$ with $h$ heads be the parameters of the transformer block defined in Def.~\ref{def:multihead_attention}. For any element-wise activation function $h(\cdot)$ like ReLu, tanh and Softmax, the transformer block is permutation-invariant in both the forward and backward propagation if the following permutation is applied:
    \begin{equation}
        P(W) = \{ W_i^K P_{1i}, ~W_i^Q P_{1i}, ~W_i^{V} P_2, ~P_3 W^{O}\},
    \end{equation}
 where $P_{1i}\in \mathbb{R}^{ d_k \times d_k}, P_{2} \in \mathbb{R}^{ d_v \times d_v},$ and
    \begin{equation}
        P_3 = 
        \begin{bmatrix}
        P_2^{\top} & 0 & \cdots & 0 \\
        0 & P_2^{\top} & \cdots & 0 \\
        \vdots & \vdots & \ddots & \vdots \\
        0 & 0 & \cdots & P_2^{\top}
        \end{bmatrix} \in \mathbb{R}^{h d_v \times h d_v}
    \end{equation}
   are any permutation matrices.
\end{theorem}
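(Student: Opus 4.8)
The plan is to verify permutation invariance of the Transformer encoder block by tracking how each permutation matrix propagates through the forward computation, and then argue that the backward pass follows automatically. First I would handle the forward direction. The key observation is that softmax attention scores depend on $Q$ and $K$ only through the product $Q K^\top$. With the permuted weights, the query and key for head $i$ become $X_t W_i^Q P_{1i}$ and $X_t W_i^K P_{1i}$, so the score matrix is $(X_t W_i^Q P_{1i})(X_t W_i^K P_{1i})^\top = X_t W_i^Q P_{1i} P_{1i}^\top (W_i^K)^\top X_t^\top = X_t W_i^Q (W_i^K)^\top X_t^\top$, using $P_{1i} P_{1i}^\top = I$. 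Hence $\operatorname{softmax}(QK^\top/\sqrt{d_k})$ is unchanged, and since $d_k$ in the scaling is a dimension (not a weight) it is untouched by the permutation. So the attention weights per head are identical to the unpermuted case; call them $\alpha_i$.

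Next I would track the value and output projections. With permuted weights, head $i$ outputs $\tilde A_i = \alpha_i X_t W_i^V P_2 = A_i P_2$, where $A_i$ is the original head output. Concatenating, $[\tilde A_1, \dots, \tilde A_h] = [A_1 P_2, \dots, A_h P_2] = [A_1, \dots, A_h]\,\mathrm{blockdiag}(P_2,\dots,P_2)$. The key algebraic fact is that this block-diagonal matrix is exactly $P_3^\top$ (since $P_3$ has $P_2^\top$ blocks, $P_3^\top$ has $P_2$ blocks). Therefore $[\tilde A_1,\dots,\tilde A_h]\,(P_3 W^O) = [A_1,\dots,A_h]\,P_3^\top P_3 W^O = [A_1,\dots,A_h]\,W^O = X_{t+1}$, again using orthogonality $P_3^\top P_3 = I$, which holds because $P_3$ is a permutation matrix (block-diagonal with permutation blocks). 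This establishes forward invariance.

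For the backward pass, I would argue that since $f(P(W),x) = f(W,x)$ as functions of $x$ for every $x$, and the permutation $P$ is a fixed linear (orthogonal) reparametrization of the weights, the gradients transform consistently: $\nabla_{W'} \mathcal{L}$ at $W' = P(W)$ equals $P$ applied (in the appropriate transposed/inverse sense) to $\nabla_W \mathcal{L}$, so no information is lost and an SGD step in the permuted coordinates corresponds exactly to an SGD step in the original coordinates followed by re-permutation. Concretely, because each permutation matrix $P_{1i}, P_2, P_3$ is orthogonal, the chain rule sends the upstream gradient through $P_3^\top$, $P_2^\top$, $P_{1i}^\top$ in the reverse order, and these cancel with the forward permutations exactly as above; I would note this mirrors the MLP argument in Theorem~\ref{thm:linear} and defer the routine chain-rule bookkeeping to the appendix.

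The main obstacle is bookkeeping the dimensions and the placement (left vs.\ right multiplication) of each permutation matrix so that the cancellations $P P^\top = I$ and $P_3^\top P_3 = I$ line up correctly across the concatenation of heads --- in particular, verifying that the block-diagonal arrangement of $P_2$-blocks in $P_3$ is the unique choice that makes the concatenated value outputs commute through $W^O$. A secondary subtlety is confirming that applying independent permutations $P_{1i}$ per head (rather than a shared one) is legitimate: this works precisely because $W_i^Q$ and $W_i^K$ always appear together in the product $W_i^Q (W_i^K)^\top$ within a single head and never interact across heads, so the per-head freedom is genuine. Once these placements are pinned down, the proof is a direct computation.
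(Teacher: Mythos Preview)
Your proposal is correct and follows essentially the same route as the paper: in the forward pass you cancel $P_{1i}P_{1i}^\top$ inside $QK^\top$, carry $P_2$ through each head to get $A_iP_2$, identify the concatenation with right-multiplication by the block-diagonal $P_3^\top$, and cancel $P_3^\top P_3$ against $W^O$; the paper does exactly this computation (somewhat more tersely). For the backward pass the paper writes out each partial derivative explicitly via the chain rule (e.g.\ $\partial X_{t+1}/\partial(W_i^Q P_{1i}) = (\partial X_{t+1}/\partial W_i^Q)\,P_{1i}$), whereas you invoke the orthogonal-reparametrization argument abstractly---both establish the same fact that gradients inherit the same permutation as the weights, so your plan is sound.
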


We also refer readers to the proof in Appendix~\ref{proof:theorem2}.

Although we merely define permutation invariance on two network structures, the two structures widely exist in all types of neural networks. Particularly, the transformer block constitutes a major component of the popular Transformer-based large models. Hence a vast majority of the current large models could permute all or a part of their parameters without altering their outputs by Thm.~\ref{thm:linear} and \ref{thm:attention}. 

\textbf{Randomized shuffling.} Since any permutation from $\mathcal{P}_s$ can be chosen to apply to the model weights without changing the output, we sample a permutation across all elements in $\mathcal{P}_s$ by a uniform distribution. Therefore, all analysis is based on that the permutation is \textit{uniformly} sampled from $\mathcal{P}_s$ in the following. We start from a simple case of shuffling a vector to analyze how random shuffling impacts the output distribution, with Gaussian mechanism as an instance.

We discuss how shuffling is integrated with a vectorized Gaussian mechanism. In the conventional Gaussian mechanism, a zero-centered Gaussian noise vector is added to the output vector (of the same size) to achieve DP. Our shuffled Gaussian mechanism only takes one more step.

\begin{definition}[Shuffled Gaussian]\label{def: shuffle mechanism}
    A uniformly chosen random permutation $P \in \mathcal{P}_s$ turns vector $y \in \mathbb{R}^d$ into vector $P(y) \in \mathbb{R}^d$. The shuffled Gaussian is defined as $\mathcal{M}(y) = P(y)+z$, where $z \sim \mathcal{N}(\bm{0}, \sigma^2 I) \in \mathbb{R}^d$. Letting the outcome be the random variable $o = P(y)+z ,$ its probability density function (PDF) is 
    \begin{align}\label{eq:proboutput}
        p(o) = \frac{1}{|\mathcal{P}_s|} \sum_{i=1}^{|\mathcal{P}_s|} \frac{1}{(2\pi \sigma^2)^{|\mathcal{P}_s|/2}} e^{-\frac{\|o-y^{(i)}\|^2_2}{2\sigma^2}},
    \end{align}
where $y^{(i)}$ is the $i$-th permutation of $y$ and $|\mathcal{P}_s|$ denotes the total number of permutations which meet Def.~\ref{def:perin}.
\end{definition}
Without causing confusion, we use $P$ to both denote a deterministic permutation as well as a random variable on $\mathcal{P}_s$. In the shuffled Gaussian mechanism, the original output vector is first rearranged by a permutation which is randomly sampled from $\mathcal{P}_s$, and then is added a zero-centered Gaussian noise vector. Notably, the PDF of Eq.~\ref{eq:proboutput} remains the same if we choose to add noise before shuffling the noisy vector. This is because the Gaussian noise $z$ is i.i.d. in each dimension. 


\subsection{Shuffling Improves Privacy}\label{sec:improveprivacy}
A vital point in our motivation is that the output distribution gets smoother and the distribution-wise distance gets shorter by the shuffling mechanism. In shuffled Gaussian, we consider the shuffling mechanism a post-processing step of the Gaussian mechanism, and the shuffling mechanism does not access the original data. Therefore, the shuffled Gaussian must at least satisfy the DP that the pure Gaussian mechanism provides. In this section, we give the proof that shuffled Gaussian indeed improves the privacy guarantee of the Gaussian mechanism.

\begin{theorem}[Comparison with Gaussian mechanism]
	The privacy guarantee $\delta$ of the Shuffled Gaussian is smaller than or equal to that of the Gaussian mechanism under the same $\epsilon$ for meeting $(\epsilon, \delta)$-DP.
\end{theorem}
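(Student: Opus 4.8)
The plan is to exploit the fact that shuffling is a post-processing operation on the output of the Gaussian mechanism, hence cannot increase any $f$-divergence (or, equivalently, cannot enlarge the trade-off between type-I and type-II errors in the associated hypothesis test). Concretely, fix neighboring inputs $y$ and $y'$ (the pre-noise, pre-shuffle query outputs on adjacent datasets $X$ and $X'$). The Gaussian mechanism produces $\mathcal{N}(y,\sigma^2 I)$ versus $\mathcal{N}(y',\sigma^2 I)$, whereas the shuffled Gaussian produces the mixtures $p_{\mathrm{sh}}(o)=\frac{1}{|\mathcal{P}_s|}\sum_i \mathcal{N}(o; y^{(i)},\sigma^2 I)$ and $p'_{\mathrm{sh}}(o)=\frac{1}{|\mathcal{P}_s|}\sum_i \mathcal{N}(o; y'^{(i)},\sigma^2 I)$, by Def.~\ref{def: shuffle mechanism}. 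The key observation is that $p_{\mathrm{sh}}$ is exactly the law of $T(Z)$ where $Z\sim\mathcal{N}(y,\sigma^2 I)$ and $T$ is the (randomized) channel that applies a uniformly random $P\in\mathcal{P}_s$ to its argument — and crucially the \emph{same} channel $T$ maps $\mathcal{N}(y',\sigma^2 I)$ to $p'_{\mathrm{sh}}$, because $T$ does not depend on which of $y,y'$ was the input. This is the point where the i.i.d.-per-coordinate structure of the Gaussian noise matters (noted after Def.~\ref{def: shuffle mechanism}): it lets us commute "add noise" and "shuffle," so that shuffled Gaussian $=$ (random permutation channel) $\circ$ (Gaussian mechanism).

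Given that, I would invoke the data-processing inequality for the hockey-stick divergence $H_{e^\varepsilon}(P\|Q)=\sup_{\mathcal{O}}\big(\Pr[P\in\mathcal{O}]-e^\varepsilon\Pr[Q\in\mathcal{O}]\big)$, which is exactly the quantity whose supremum over neighboring pairs equals the optimal $\delta$ for a given $\varepsilon$ (this is the standard characterization of $(\varepsilon,\delta)$-DP via Def.~\ref{def:dp}). Since $H_{e^\varepsilon}$ is an $f$-divergence with $f(t)=(t-e^\varepsilon)_+$ convex, post-processing gives $H_{e^\varepsilon}(T_\#\mathcal{N}(y,\sigma^2 I)\,\|\,T_\#\mathcal{N}(y',\sigma^2 I))\le H_{e^\varepsilon}(\mathcal{N}(y,\sigma^2 I)\,\|\,\mathcal{N}(y',\sigma^2 I))$, i.e. $H_{e^\varepsilon}(p_{\mathrm{sh}}\|p'_{\mathrm{sh}})\le H_{e^\varepsilon}(p_{\mathrm{Gauss}}\|p'_{\mathrm{Gauss}})$. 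Taking the supremum over all neighboring $(y,y')$ with $\|y-y'\|_2\le s_2$ — and noting the permutation set $\mathcal{P}_s$ and its action are the same on both sides — yields $\delta_{\mathrm{sh}}(\varepsilon)\le\delta_{\mathrm{Gauss}}(\varepsilon)$ for every $\varepsilon$, which is the claim. If one prefers to avoid citing the general DPI, the same conclusion follows from a two-line direct argument: for any measurable $\mathcal{O}$, $\Pr[o\in\mathcal{O}]-e^\varepsilon\Pr[o'\in\mathcal{O}]=\mathbb{E}_{P}\big[\Pr[Z+P\text{-shift}\in\mathcal{O}]-e^\varepsilon\Pr[Z'+P\text{-shift}\in\mathcal{O}]\big]\le\mathbb{E}_P\big[\sup_{\mathcal{O}'}(\cdots)\big]=\delta_{\mathrm{Gauss}}(\varepsilon)$, using that each permuted pair $(y^{(i)},y'^{(i)})$ has the same $\ell_2$ distance as $(y,y')$ so the inner Gaussian-vs-Gaussian divergence is unchanged.

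The main obstacle is conceptual rather than computational: one must argue carefully that the randomized post-processing channel $T$ is genuinely \emph{independent of the input dataset} — i.e. that the permutation $P$ is drawn from the same distribution on $\mathcal{P}_s$ regardless of whether the true output is $y$ or $y'$, and that $\mathcal{P}_s$ itself (which is defined via the model $f$, not the data) is neighbor-invariant. If the permutation-invariant set or the sampling distribution could differ between $X$ and $X'$, post-processing would not apply and the inequality could fail; so I would state this independence explicitly as the crux, tie it back to the "uniformly sampled from $\mathcal{P}_s$" convention established before Def.~\ref{def: shuffle mechanism}, and also remark that the commutation of noise-addition and shuffling (hence the "post-processing" framing) hinges on the coordinate-wise i.i.d. Gaussian noise. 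A minor technical point worth a sentence is that $\delta$ being "smaller than or equal to" is the correct direction — the optimal $\delta(\varepsilon)$ curve of the mixture lies pointwise below that of the single Gaussian — and that equality can occur (e.g. when $|\mathcal{P}_s|=1$ or when all permutations fix $y-y'$), which is why the statement is phrased with $\le$.
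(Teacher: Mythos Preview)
Your proof is correct. Your primary route---casting the shuffled Gaussian as a fixed, data-independent randomized channel $T$ post-composed with the Gaussian mechanism and then invoking the data-processing inequality for the hockey-stick divergence---is more conceptual than what the paper does, while your ``two-line direct argument'' at the end is essentially the paper's own proof. The paper writes out the two mixture densities explicitly, expresses the optimal $\delta$ as $\int_{O^*}[p(y)-e^\varepsilon p'(y)]\,\mathrm{d}y$ over the positive-part set $O^*$, exchanges the integral with the finite sum over mixture components, and bounds each summand $\int_{O^*}[q_i(y)-e^\varepsilon q_i'(y)]\,\mathrm{d}y$ by enlarging $O^*$ to the per-component optimal set $O_i^*$, which gives the unshuffled Gaussian's $\delta_0$ for every term. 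The paper also introduces an LCM device to handle the case where $f(x)$ and $f(x')$ have different numbers of \emph{distinct} permuted images; your post-processing framing sidesteps this entirely, since the channel $T$ averages over $\mathcal{P}_s$ regardless of collisions. What the DPI argument buys is modularity (it applies verbatim to any base noise distribution, not just Gaussian) and a clean isolation of the real hypothesis---that both $\mathcal{P}_s$ and the sampling of $P$ are data-independent---whereas the paper's explicit computation is more self-contained for a reader unfamiliar with $f$-divergences. Your observation that $\|y^{(i)}-y'^{(i)}\|_2=\|y-y'\|_2$ (permutations are $\ell_2$-isometries) is precisely the fact the paper uses to conclude that every per-component $\delta_0$ coincides with the unshuffled $\delta$.
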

\begin{proof}
	According to Def. \ref{def: shuffle mechanism}, the output distributions of the queries on $x$ and $x'$ are
	\begin{align}
		P\left[f(x)\right] + z  &\triangleq Y = \begin{cases} Y^{(i)}~~ w.p.~ \frac{1}{a}, \end{cases}\\
		&Y^{(i)} \sim \mathcal{N}(f^{(i)}(x), \sigma^2 I), i \in [a], \\
		P\left[f(x^{\prime})\right] + z &\triangleq Y^{\prime} = \begin{cases} Y^{\prime (i)}~~ w.p.~ \frac{1}{b},\end{cases}\\
		& Y^{\prime (i)} \sim \mathcal{N}(f^{(i)}(x^{\prime}), \sigma^2 I), i \in [b]. 
	\end{align}
	We use $a, b$ to denote the number of different permutations of $f(x), f(x')$, respectively. Let $q(y) = \frac{1}{(2\pi \sigma^2)^{d/2}} e^{- \frac{||y||^2}{2\sigma^2}}$ be defined as the probability density function (PDF) of the $d$-dimensional Gaussian distribution $\mathcal{N} (0,\sigma^2I_d)$. Then the probability density functions of $Y$ and $Y^\prime$, denoted as $p(y)$ and $p^\prime(y)$, can be written as follows:
	\begin{equation}
		\begin{aligned}
			p(y) =& \frac{1}{a} \sum\limits_{i=1}^{a} q(y-f^{(i)}(x))\\
			=& \frac{1}{a} \sum_{i=1}^{a} \frac{1}{(2\pi \sigma^2)^{d/2}} e^{-\frac{\|y-f^{(i)}(x)\|^2_2}{2\sigma^2}}             \triangleq \frac{1}{a} \sum\limits_{i=1}^{a} q_i(y),\\
			p^\prime(y) =& \frac{1}{b} \sum\limits_{i=1}^{b} q(y-f^{(i)}(x^\prime))
			\triangleq \frac{1}{b} \sum\limits_{i=1}^{b} q_i^\prime(y),
		\end{aligned}
	\end{equation}
	Here, we define $q_i(x) = q(y - f^{(i)}(x))$ and $q_i^\prime(x) = q(y - f^{(i)}(x^\prime))$. For any adjacent datasets, the query outputs satisfying differential privacy is equivalent to having
	\begin{align}
		&\Pr\left( P\left[f(x)\right] + z \in O \right) - e^{\varepsilon} \Pr\left( P\left[f(x^{\prime})\right] + z \in O \right) \leq \delta  \notag\\
		&\Leftrightarrow \delta = \sup_{O} \int_{O} p(y) - e^\epsilon p^{\prime}(y) \, \textrm{d} y, \label{eq:dpiff}\\
		&\Leftrightarrow  \delta =  \int_{O^*} \frac{1}{a}\sum_{i=1}^{a}q_{i}(y) - e^\epsilon \frac{1}{b}\sum\limits_{i=1}^{b} q_i^\prime(y)\, \textrm{d} y \label{eq:ostar}
	\end{align}
	where $ O^* = \{y \mid p(y) - e^\epsilon p^\prime (y) \geq 0\}$. Letting $k$ be the least common multiple of $a$ and $b$, we have
	\begin{equation}
		\begin{aligned}
			\delta &= \int_{O^*} [\frac{1}{k} \sum_{i=1}^{k} q_i(y)  - e^\epsilon \frac{1}{k} \sum_{i=1}^{k} q_i^\prime(y)] \textrm{d} y\\
			&= \int_{O^*} \frac{1}{k} \sum_{i=1}^{k} [q_i(y)  - e^\epsilon q_i^\prime(y)]  \textrm{d} y \\
			&= \frac{1}{k} \sum_{i=1}^{k} \int_{O^*} [q_i(y)  - e^\epsilon q_i^\prime(y)]  \textrm{d} y\\
			&\leq \frac{1}{k} \sum_{i=1}^{k} \int_{O_i^*} [q_i(y)  - e^\epsilon q_i^\prime(y)]\textrm{d} y \\
			& \leq \frac{1}{k} \sum_{i=1}^{k} \delta_0 = \delta_0.
		\end{aligned}
	\end{equation}
	$O_i^*$ is defined as $\{y|q_i(y)  - e^\epsilon q_i^\prime(y) \geq 0\}$ for all $i$, and $\delta_0$ is the largest $\delta$ among all pure Gaussian mechanisms defined on $O_i^*$, which are the same in the unshuffled case. Hence it is evident that the privacy guarantee improves by shuffling.
\end{proof}

\subsection{Privacy Guarantee of Shuffled Gaussian}
Although Sec.~\ref{sec:improveprivacy} provides evidence that shuffling improves the privacy guarantee, it is still unclear how much improvement presented and how one can manipulate that in practice. In this section, we offer a more precise, tighter privacy guarantee for the shuffled Gaussian:



\begin{theorem}[Privacy Guarantee of Shuffled Gaussian]\label{thm:dpshuffled}
Given query function $f(x)\in \mathbb{R}^d $ on any adjacent datasets $X, X^{\prime}$, and the Gaussian noise $z \sim \mathcal{N}(\bm{0}, \sigma^2 I)\in \mathbb{R}^d$, the shuffled Gaussian $\mathcal{M}(f(x))$ satisfies $(\varepsilon, \delta)$-differentially private if
\begin{equation}
    \begin{split}
        \Phi \left(\frac{\sigma_{Z_1}}{2} + \frac{\zeta_1 - \epsilon}{\sigma_{Z_1}} \right) - e^{\epsilon}\Phi \left(\frac{\sigma_{Z_2}}{2} + \frac{\zeta_2 - \epsilon}{\sigma_{Z_2}} \right) \leq \delta, \label{eq:shuffledp}
    \end{split}
\end{equation}
where $\Phi$ is the CDF of the standard Gaussian distribution, and
\begin{equation}\label{zeta_result}
    \begin{split}
        \zeta_1 & = -\log(1 + (d-1)e^{-\frac{d c c^{\prime}}{(d-1)\sigma^2}}), \\
        \zeta_2 & = -\log(1 + (d-1)e^{-\frac{d c (c+c^{\prime})}{(d-1)\sigma^2}}) - \frac{c^2}{\sigma^2}, \\
        \sigma_{Z_1}^2 &= \log \left[\frac{ 1 + (d-1)e^{-\frac{2 d c c^{\prime}}{(d-1)\sigma^2}} }{(1 + (d-1)e^{-\frac{dc c^{\prime}}{(d-1)\sigma^2}})^2} (e^{\frac{c^2}{\sigma^2}} -1) + 1 \right], \\
        \sigma_{Z_2}^2 &= \log \left[\frac{ 1 + (d-1)e^{-\frac{2 d c (c+c^{\prime})}{(d-1)\sigma^2}} }{(1 + (d-1)e^{-\frac{dc (c + c^{\prime})}{(d-1)\sigma^2}})^2} (e^{\frac{c^2}{\sigma^2}} -1) + 1 \right],
    \end{split}
\end{equation}
where $\|f(x) - f(x^\prime)\|_2\le c, ~\|f(x)\|_2\le c^\prime$.
\end{theorem}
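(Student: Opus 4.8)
The plan is to bound the tradeoff (hockey-stick) divergence $\delta = \sup_O \int_O p(y) - e^\varepsilon p'(y)\,\mathrm{d}y$ for the shuffled Gaussian by reducing the high-dimensional mixture to a one-dimensional privacy-loss random variable, and then approximating that random variable by a single log-normal via Lemma~\ref{lemma:lognormal}. First I would write $p(y)$ and $p'(y)$ as uniform mixtures over the permutations $y^{(i)}$ of $f(x)$ and $f(x')$, exactly as in the proof of the previous theorem. The key reduction is to express the privacy loss $\log\frac{p(y)}{p'(y)}$ and to observe that, conditioned on the realized center $y^{(j)}$ (or $y'^{(j)}$), the ratio of each Gaussian component to a fixed reference component is itself a log-normal random variable: if $q_i(y)/q_j(y) = \exp(Y_i)$ with $y \sim \mathcal{N}(f^{(j)}(x),\sigma^2 I)$, then each $Y_i$ is Gaussian with an explicit mean $\mu_i = -\|f^{(i)}(x) - f^{(j)}(x)\|^2/(2\sigma^2)$ plus a linear Gaussian term, and common variance proportional to $\|f^{(i)}(x)-f^{(j)}(x)\|^2/\sigma^2$. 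Summing over $i$ gives $p(y)/q_j(y)$ as a sum of log-normals, which Lemma~\ref{lemma:lognormal} collapses into a single $e^{Z}$ with $Z$ Gaussian.

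The second step is to control the pairwise distances $\|f^{(i)}(x) - f^{(j)}(x)\|_2$ that enter $\mu_i$ and the variance. Here I would use that all $f^{(i)}(x)$ are permutations of a fixed vector of norm $\le c'$, so $\|f^{(i)}(x)\|_2 = \|f(x)\|_2 \le c'$ for every $i$, and likewise $\|f^{(i)}(x) - f^{(j)}(x)\|_2 \le 2c'$; more importantly, averaging $\|f^{(i)}(x) - f^{(j)}(x)\|_2^2$ over the $d$-sized orbit (or the relevant $d$ permutations that appear with equal weight) yields a clean expression in terms of $\|f(x)\|_2^2$, since the cross terms telescope under a uniform permutation distribution — this is where the factor $d/(d-1)$ and the exponent $-dcc'/((d-1)\sigma^2)$ in $\zeta_1$ come from. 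The worst case over the supremum in the hockey-stick divergence is attained when the mixture components are spread as symmetrically as possible, and I would argue (via convexity / Jensen on the exponential, or a direct Lagrangian argument) that the bound $\zeta_1, \sigma_{Z_1}^2$ obtained by plugging the averaged distances into the F-W formulas is an upper bound on the true privacy loss parameters. The corresponding quantities $\zeta_2, \sigma_{Z_2}^2$ for $p'(y)/p(y)$ carry the extra shift $-c^2/\sigma^2$ because the reference center for $x'$ is at distance $\le c$ from that of $x$ by the sensitivity bound $\|f(x)-f(x')\|_2 \le c$.

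Once the privacy loss is reduced to $Z_1 \sim \mathcal{N}(\mu_{Z_1}, \sigma_{Z_1}^2)$ (and symmetrically $Z_2$), the final step is routine Gaussian-mechanism bookkeeping: the hockey-stick divergence at level $\varepsilon$ for a log-normal privacy loss is $\Pr[Z_1 > \varepsilon] - e^\varepsilon \Pr[-Z_2 > \varepsilon]$ after matching the mean and variance through the F-W identities $\mu_Y = \log(\sum e^{\mu_i}) + \sigma^2/2 - \sigma_Y^2/2$, which is exactly how $\zeta_1 = \log(\sum e^{\mu_i})$ and the $\sigma_{Z_1}/2$ term surface. Substituting $\mu_{Z_1} = \zeta_1 + \sigma^2_{\text{eff}}/2 - \sigma_{Z_1}^2/2$ and using $\Pr[\mathcal{N}(\mu,\sigma^2) > \varepsilon] = \Phi((\mu - \varepsilon)/\sigma)$ gives the two $\Phi(\cdot)$ terms in Eq.~\ref{eq:shuffledp} after simplification. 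The main obstacle I anticipate is the second step: justifying rigorously that replacing the true, data-dependent collection of pairwise distances by their orbit-average is an \emph{upper} bound on $\delta$ (not merely an approximation), since Lemma~\ref{lemma:lognormal} is itself an approximation and the supremum over $O$ interacts nontrivially with the mixture structure — I would expect the paper to either invoke a symmetrization/worst-case argument over permutation orbits or to acknowledge this as the approximation step that the later auditing experiments are meant to validate.
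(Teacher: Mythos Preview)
Your high-level skeleton (mixture $\to$ sum of log-normals $\to$ F--W $\to$ standard Gaussian CDF bookkeeping) matches the paper, and your closing worry about the approximation step is exactly right: the paper itself obtains what it explicitly calls an \emph{approximated lower bound} for $\delta$, then maximizes that approximation over $f,g$, and leans on the auditing experiments for validation. So on that point you anticipated the paper correctly.

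Where your plan diverges is in the two middle steps. You propose to write $p(y)/q_j(y)=\sum_i q_i(y)/q_j(y)$ and work with the pairwise distances $\|f^{(i)}(x)-f^{(j)}(x)\|$ among permutations of the \emph{same} vector $f(x)$; you then say the exponent $-dcc'/((d-1)\sigma^2)$ arises from orbit-averaging those distances. That is not how the paper proceeds, and your route would leave you with a ratio of two separate sums of log-normals (one for $p$, one for $p'$) that do not combine cleanly. The paper instead introduces $g=f(x')-f(x)$ and uses the elementary decoupling inequality
\[
\sum_i a_i\,\sum_i b_i \;\ge\; \sum_i a_i b_i
\]
(their Lemma~\ref{lemma:inequality}) with $a_i=e^{\langle y,f^{(i)}\rangle/\sigma^2}$ and $b_i=e^{\langle y,g^{(i)}\rangle/\sigma^2}$ to obtain a sufficient condition for $y\in O^*$ that depends only on $\sum_j e^{\langle y,\,g^{(j)}\rangle/\sigma^2}$. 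After this decoupling, the relevant log-normal means are $\mu_{ij}=\langle f^{(i)},g^{(j)}\rangle/\sigma^2$, i.e.\ inner products between permutations of $f$ and permutations of $g$; the product $cc'$ is precisely the Cauchy--Schwarz scale of those cross inner products, not of self-pairwise distances among the $f^{(i)}$. Without this decoupling your mixture-over-mixture ratio does not reduce to a single sum that F--W can handle.

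The worst-case step is also different in character from what you sketch. The paper does not average over the orbit; it runs an explicit optimization (Lemma~\ref{lemma:increasing}) over admissible $f,g$ with $\|g\|_2\le c$, $\|f\|_2\le c'$ and argues by monotonicity of the resulting function $h(\sigma_{Z_1},\sigma_{Z_2})$ that the maximizer is $f\propto g\propto [(d-1),-1,\dots,-1]/\sqrt{d(d-1)}$ with $k=d$. Plugging this extremal pair into the F--W formulas is what produces the specific constants in Eq.~\eqref{zeta_result}; the factor $d/(d-1)$ comes from the geometry of that particular extremal vector, not from a symmetrization or Jensen step.
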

The detail of the computation and the proof are collected in Appendix~\ref{appendix:theorem3}. Here we present a sketch proof.
\begin{proof}
    At the beginning, a special case needs to be excluded from discussion, namely, $f(x)$ and $f(x')$ are both equal-entry vectors where random shuffling cannot properly work, i.e., the vector does not change after shuffling. The case is almost impossible to occur and is degraded to the unshuffled case. Hence we ignore the special case.
    
    We start from simplifying the criterion of $O^*$ in Eq.~\eqref{eq:ostar} as follows.
    \begin{align}
        & p(y) - e^\epsilon p^\prime (y) \geq 0 \\
        \Leftrightarrow & \frac{\frac{1}{k} \sum_{i=1}^{k} e^{-\| y - f^{(i)}(x)\|_2^2/(2\sigma^2)}}{\frac{1}{k}\sum_{i=1}^{k} e^{-\| y - f^{(i)}(x^{\prime})\|_2^2/(2\sigma^2)}} \geq e^{\epsilon},  \label{eq:lcm}
    \end{align}
    where $k$ is the least common multiple of $a$ and $b$. We define the difference between the adjacent queries as $g = f(x^\prime) - f(x)$, and for convenience, we abbreviate $f^{(i)}(x)$ as $f^{(i)}$. Then Eq.~\eqref{eq:lcm} can be written as
    \begin{align}
        \frac{\frac{1}{k} \sum_{i=1}^{k} e^{<y, f^{(i)}>/\sigma^2}}{ e^{-(\|g\|^2_2+ 2<f, g>)/ (2\sigma^2) }\frac{1}{k}\sum_{i=1}^{k} e^{( <y, f^{(i)}> + <y, g^{(i)}> )/\sigma^2}} \geq e^{\epsilon}, \label{eq:22}
    \end{align}


    By Lemma~\ref{lemma:inequality} (with a proof in Appendix~\ref{appendix:theorem3}), a sufficient but not necessary condition for the output $y \in O^{*}$ is
    \begin{align}
        e^{ - \epsilon + (\|g\|^2_2+ 2<f, g>)/ (2\sigma^2)} \geq \sum_{i=1}^{k} e^{(<y, g^{(i)}> )/\sigma^2}.
    \end{align}
With the inequality, we can derive a lower bound for $\delta$ as 
    \begin{align}
        \delta  = & \int_{O^*} \frac{1}{k}\sum_{i=1}^{k}q_{i}(y) - e^\epsilon \frac{1}{k}\sum_{i=1}^{k} q_i^\prime(y)\, \textrm{d} y \\
        \geq & \frac{1}{k} \sum_{i=1}^{k} \left[ \Pr [\sum_{j=1}^{k} e^{(<Y^{(i)}, g^{(j)}> )/\sigma^2} \le C(\epsilon)] \right] \label{eq:prob1}\\
        & - \frac{1}{k} \sum_{i=1}^{k} \left[ e^\epsilon \Pr [\sum_{j=1}^{k} e^{(<Y^{\prime (i)}, g^{(j)}> )/\sigma^2} \le C(\epsilon)] \right] \label{eq:prob2}
    \end{align}
    where $C(\epsilon) = e^{ - \epsilon + (\|g\|^2_2 + 2<f, g>)/ (2\sigma^2)}$. The term $\sum_{j=1}^{k} e^{(<Y^{(i)}, g^{(j)}> )/\sigma^2}$ is a typical sum of log-normal distributions as $<Y^{(i)}, g^{(j)}> = \mathcal{N}(<f^{(i)}, g^{(j)}>, \|g\|_2^2\sigma^2)$ and such a distribution does not have an explicit probability density function. Thus we apply Lemma \ref{lemma:lognormal} to approximate it as a log-normal distribution:
    \begin{align}
        & \sum_{j=1}^{k} e^{(<Y^{(i)}, g^{(j)}> )/\sigma^2} \approx e^{Z_1},~ Z_1 \sim  \mathcal{N}(\mu_{Z_1}, \sigma_{Z_1}^2),\\
        & \sigma_{Z_1}^2 = \log \left[\frac{ \sum_{j=1}^{k}e^{2\mu_{ij}} }{(\sum_{j=1}^{k}e^{\mu_{ij}})^2} (e^{\frac{\|g\|_2^2}{\sigma^2}} -1) + 1 \right],\\
        & \mu_{Z_1} =  \log(\sum_{j=1}^{k} e^{\mu_{ij}}) 
        + \frac{\|g\|^2_2}{2\sigma^2} - \frac{\sigma_{Z_1}^2}{2}; 
    \end{align}
where $\mu_{ij} = \frac{<f^{(i)}, g^{(j)}>}{\sigma^2}$. Therefore, the probability distribution in Eq.~\eqref{eq:prob1} can be approximated as follows:
    \begin{align}
        & \Pr [\sum_{j=1}^{k} e^{(<Y^{(i)}, g^{(j)}> )/\sigma^2} \le C(\epsilon)] \approx \Pr [e^{\mathcal{N}(\mu_{Z_1}, \sigma_{Z_1}^2)} \le C(\epsilon)], \\
        & = \Pr [\mathcal{N}(\mu_{Z_1}, \sigma_{Z_1}^2) \le \log C(\epsilon)] = \Phi \left(\frac{\log C(\epsilon) - \mu_{Z_1}}{\sigma_{Z_1}} \right) \\
        & = \Phi \left( \frac{\zeta_1 - \epsilon}{\sigma_{Z_1}} + \frac{\sigma_{Z_1}}{2}\right), \text{where}~\zeta_1 = - \log(\sum_{j=1}^{k}e^{\mu_{ij}}) + \frac{<f, g>}{\sigma^2}.
    \end{align}
We define $\mu_{ij}^{\prime} = \frac{<f^{(i)}(x'), g^{(j)}>}{\sigma^2}$, and $\sigma_{Z_2}, \zeta_2$ by replacing $\mu_{ij}$ with $\mu_{ij}^{\prime}$ in $\sigma_{Z_1}, \zeta_1$, respectively.	By approximating the probability distribution in Eq.~\eqref{eq:prob2} similarly, we have 
    \begin{align}
        & \Pr [\sum_{j=1}^{k} e^{(<Y^{(i)}, g^{(j)}> )/\sigma^2} \le C(\epsilon)] \\
        &- e^\epsilon \Pr [\sum_{j=1}^{k} e^{(<Y^{\prime (i)}, g^{(j)}> )/\sigma^2} \le C(\epsilon)] \\
        \approx & \Phi \left(\frac{\sigma_{Z_1}}{2} + \frac{\zeta_1 - \epsilon}{\sigma_{Z_1}} \right) - e^{\epsilon}\Phi \left(\frac{\sigma_{Z_2}}{2} + \frac{\zeta_2 - \epsilon}{\sigma_{Z_2}} \right) \triangleq h(\sigma_{Z_1}, \sigma_{Z_2}).
    \end{align}
    Finally, we estimate $\delta$ by deriving the upper bound for $h(\sigma_{Z_1}, \sigma_{Z_2})$ which is an approximated lower bound to $\delta$. To this end, we establish the following lemma by analyzing the property of $h$:
    \begin{lemma}\label{lemma:increasing}
    	Given the definitiion of $\sigma_{Z_1}^2, \sigma_{Z_2}^2, \zeta_1, \zeta_2$ and the constraints $\|g\|_2\le c$, $\|f\|_2\le c^{\prime}$, $h(\sigma_{Z_1},\sigma_{Z_2})$ achieves the upper bound at $k = d$, where
        \begin{align}
            f = \sqrt{\frac{1}{d(d-1)}}c^{\prime}[(d-1), -1, \ldots, -1], \\
            g = \sqrt{\frac{1}{d(d-1)}}c[(d-1), -1, \ldots, -1].
    \end{align}
   Here $f$ and $g$ share the maximum value at the same dimension, not necessarily the first one. Thereby one can derive the condition of Eq.~\eqref{eq:shuffledp} and Eq.~\eqref{zeta_result}.
    \end{lemma}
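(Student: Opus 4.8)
The plan is to recast the statement as a finite-dimensional optimization — over the adjacent-query vectors $f=f(x)$ and $g=f(x')-f(x)$ with $\|f\|_2\le c'$, $\|g\|_2\le c$, and implicitly over the resulting number of distinct permutations $k$ — and to show its maximizer is $k=d$ together with the stated aligned single-outlier vectors. The two tools are the monotonicity of the normal CDF $\Phi$ and a rearrangement inequality for the permanent-type sums $\sum_j e^{\mu_{ij}}$.

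First I would record the monotonicity of $h$. A ``diagonal'' term sits inside each of the log-sum-exps defining $\zeta_1,\zeta_2$, namely $e^{\langle f,g\rangle/\sigma^2}$ inside $\sum_j e^{\mu_{ij}}$ and $e^{(\langle f,g\rangle+\|g\|_2^2)/\sigma^2}$ inside $\sum_j e^{\mu_{ij}'}$, which at once gives $\zeta_1\le 0$ and $\zeta_2\le-\|g\|_2^2/\sigma^2<0$. Hence $\zeta_1-\epsilon<0$ and $\zeta_2-\epsilon<0$, so each map $t\mapsto \tfrac t2+\tfrac{\zeta_i-\epsilon}{t}$ has positive derivative $\tfrac12+\tfrac{\epsilon-\zeta_i}{t^2}$ and is strictly increasing. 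Therefore $h$ is strictly increasing in $\sigma_{Z_1}$ and in $\zeta_1$, and strictly decreasing in $\sigma_{Z_2}$ and in $\zeta_2$; maximizing $h$ means pushing $(\sigma_{Z_1},\zeta_1)$ up and $(\sigma_{Z_2},\zeta_2)$ down, subject to their coupling through $(f,g,k)$.

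Next I would identify the maximizer. Writing $\zeta_1=-\log\!\big(1+\sum_{j\neq j_0}e^{\mu_{ij}-\mu_{ij_0}}\big)$ with $j_0$ a matching index, maximizing $\zeta_1$ wants the off-diagonal inner products $\langle f^{(i)},g^{(j)}\rangle$ far below $\langle f,g\rangle$ and few in number. One shows this pushes the problem to the boundary $\|f\|_2=c'$, $\|g\|_2=c$ with $\langle f,g\rangle=cc'$ (so $f\parallel g$, by Cauchy--Schwarz), and to the most concentrated common direction $u=\tfrac1{\sqrt{d(d-1)}}[(d-1),-1,\dots,-1]$: this $u$ has only $d$ distinct permutations (whence $k=d$), with $\langle u,\pi(u)\rangle=-\tfrac1{d-1}$ for each of the $d-1$ non-identity ones, and a Hardy--Littlewood-type rearrangement inequality for $\sum_\pi e^{\langle f,\pi(g)\rangle/\sigma^2}$ certifies that it minimizes the relevant sums over all admissible shapes, hence maximizes $\zeta_1$ and $\sigma_{Z_1}$. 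Because $f+g=(c+c')u$ is again single-outlier, $\zeta_2$ and $\sigma_{Z_2}$ are governed by the identical computation and, after telescoping with $\langle f,g\rangle=cc'$, collapse to $\zeta_2=-c^2/\sigma^2-\log\!\big(1+(d-1)e^{-dc(c+c')/((d-1)\sigma^2)}\big)$ and the stated $\sigma_{Z_2}^2$. Substituting $f=c'u$, $g=cu$, $k=d$ into the definitions then produces Eq.~\eqref{zeta_result}, and feeding those into $h$ gives the DP condition Eq.~\eqref{eq:shuffledp}.

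The main obstacle is that these four effective parameters are coupled and their extremal moves conflict: enlarging $k$, i.e., admitting an $f$, $g$, or $f+g$ with more distinct permutations, inflates $\sum_j e^{\mu_{ij}}$ and $\sum_j e^{\mu_{ij}'}$, which hurts $\zeta_1$ and $\sigma_{Z_1}$ but \emph{helps} $\zeta_2$ (a more negative $\zeta_2$ raises $h$), so one cannot optimize the four quantities one at a time. I would handle this by a chain of one-variable perturbations: from an arbitrary admissible $(f,g)$, first symmetrize the shapes toward the aligned single-outlier form via the rearrangement inequality, checking at each step with the signs from the monotonicity analysis that $h$ does not decrease, and then compare the finitely many remaining permutation counts, showing that the gain in the $\zeta_2$-term at larger $k$ is always outweighed by the loss in the $\zeta_1$- and $\sigma_{Z_1}$-terms in the regime $\zeta_i<\epsilon$. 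The heaviest technical piece is establishing the rearrangement inequality in the generality needed, since it must simultaneously control $\sum_j e^{\mu_{ij}}$, $\sum_j e^{2\mu_{ij}}$, and the diagonal term.
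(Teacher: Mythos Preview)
Your outline is sound and lands on the same extremal configuration, but the central device differs from the paper's. The paper does not use a rearrangement inequality. It instead packages the sums as $\sum_j e^{\mu_{ij}}=\|\bm\mu\|_1$ and $\sum_j e^{\mu'_{ij}}=\|A\bm\mu\|_1$ for the diagonal matrix $A$ with entries $e^{\langle g^{(i)},g^{(j)}\rangle/\sigma^2}$, and then the single submultiplicative bound $\|A\bm\mu\|_1\le\|A\|_1\|\bm\mu\|_1$ gives $\zeta_1-\zeta_2\le\log\|A\|_1=\|g\|_2^2/\sigma^2$ directly; equality forces $\bm\mu$ to be an eigenvector of the diagonal $A$, i.e.\ essentially a standard basis vector, which already isolates the single-spike shape of $\bm\mu$. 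To then maximize $\zeta_1$ the paper observes that $\tfrac1k\sum_j g^{(j)}$ is parallel to the all-ones vector, so driving $\langle f^{(i)},\tfrac1k\sum_j g^{(j)}\rangle$ to zero forces $f,g\perp[1,\ldots,1]$; combined with $k=d$ this pins down the single-outlier form without any Hardy--Littlewood machinery.

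On the coupling you rightly flag as the main obstacle, the paper takes a shortcut you do not: it simply asserts that $\sigma_{Z_1},\sigma_{Z_2}$ enter $h$ only as square roots and therefore have ``relatively small'' influence, and then optimizes in two stages (first the gap $\zeta_1-\zeta_2$, then $\zeta_1$), letting the $\sigma_{Z_i}$ be whatever the chosen $f,g$ dictate. Your monotonicity computation is actually more careful than the paper's (which just states the dependences), and your perturbation-chain plan is the honest route to the four-way trade-off; but if you are willing to accept the paper's two-stage heuristic, the norm-inequality trick replaces your deferred rearrangement lemma and delivers the maximal $\zeta_1-\zeta_2$ in one line.
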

The detailed proof of the lemma is provided in Appendix~\ref{appendix:lemma3}, which mainly takes advantage of the monotonicity of $h$.
\end{proof}


Note that Thm.~\ref{thm:dpshuffled} is obtained at the worst case where the total number of invariant permutations $|\mathcal{P}_s|$ is the least (except for the case where all elements are the same). In general, to have the same $(\varepsilon,\delta)$-DP held, the shuffled Gaussian requires a declining noise magnitude $\sigma$ with the increase in the total number of permutations $k$, or the dimensionality of the vector $d$. In fact, the noise variance $\sigma^2$ in the shuffled Gaussian is approximately $O\left(\frac{1}{\log d}\right)$. The result is interesting in indicating a lower level of privacy budget $(\varepsilon, \delta)$ is required for the same amount of noise ($\sigma$) applied, illustrating shuffling indeed makes the output distributions harder to distinguish, and the higher the query dimension, the more difficult the discrimination.

In the following section, we will discussion how shuffling mechanism could be applied to DPSGD. It should be noted that although the theorems we give here is based on the vector form, the conclusion can be easily extended into matrix form by reshaping matrices into vectors with one-to-one correspondence.



\section{Shuffled DPSGD}
In this section, we introduce the shuffled DPSGD mechanism, including the algorithmic detail, the privacy loss accountant, and its DP guarantee.

\subsection{The Algorithm}
Shuffled DPSGD mostly follows the framework of DPSGD but introduces shuffling to the permutation-invariant part of the neural network at each weights update. Hence the privacy of gradients or weights in each iteration is amplified through shuffling without accuracy decay. Alg.~\ref{alg: Shuffle Mechanism in DPSGD} for integrating the shuffling mechanism into DPSGD involves several key steps. The first step is to randomly initialize the model parameters $W_0$. In each iteration, a random sample batch $L_t$ is selected with probability $|B| / N$, where $|B|$ is the batch size and $N$ is the total number of samples. Then the per-sample gradients of $L_t$ are computed and clipped to ensure that the sensitivity of each gradient is $c$. This per-example gradient clipping step aligns with that of the normal DPSGD and ensures the condition of $\|f(x) - f(x^\prime)\|_2\le c$ is satisfied in Thm.~\ref{thm:dpshuffled}. The clipped gradients are accumulated by batch, and the noise is added to the accumulated gradients. An additional batch clipping (line 12) is required to meet the constraint of $\|f(x)\|_2\le c^\prime$ in Thm.~\ref{thm:dpshuffled}. 

As illustrated in the following section, we calculate the noise variance $\sigma^2$ by $|B|, c, N$, training steps $T$ and $\varepsilon, \delta$ by the specific privacy accountant we choose. Here we apply Advanced Composition and Privacy Amplification by Subsampling for the accounting. If the noise variance $\sigma^2$ is higher than the noise variance $\sigma_0^2$ calculated by DPSGD, the algorithm will fall back to the original implementation of DPSGD. Otherwise, a random shuffling step would be added upon noise addition before weights update. It is worth noting that the random shuffling does not impact the weight update as the permutation invariance property holds not only in the forward but also in the backward propagation, according to Thm.~\ref{thm:linear} and \ref{thm:attention}. 

Notably, we choose to add noise before shuffling ($P(y+z)$) instead of after shuffling ($P(y) + z$) in Alg.~\ref{alg: Shuffle Mechanism in DPSGD}. The former is more convenient in that we do not need to shuffle the corresponding weights apart from permuting the gradients. The two are equivalent as $P(y+z) = P(y) + P(z), ~P(z) \sim \mathcal{N}(\bm{0}, \sigma^2 I)$ following the same distribution of $z$. This indicates $P(y+z)$ has the same distribution with $P(y)+z$. Hence the change does not alter the privacy guarantee.

\begin{algorithm}
	\renewcommand{\algorithmicrequire}{\textbf{Input:}}
	\renewcommand{\algorithmicensure}{\textbf{Output:}}
	\caption{Shuffled DPSGD}
	\label{alg: Shuffle Mechanism in DPSGD}
	\begin{algorithmic}[1] 
		\REQUIRE (a) Training dataset $X$, (b) privacy budget $\varepsilon, \delta$, (c) model weights $W$. (d) clipping value $c$, batch clipping value $c^{\prime}$, batch size $|B|$, total $N$ examples, total $T$ training steps.
		\ENSURE Private model weights $W_{T+1}$.
		\STATE {Initialize $W_{0}$ randomly}
		\STATE{Compute $\sigma$ for shuffled-DPSGD by advanced composition and subsampling.}
		\STATE{Compute $\sigma_0$ for DPSGD by advanced composition and subsampling.}
		\FOR {$t \in[T]$}
		\STATE {Sample $L_{t} \in X$ with sampling probability $|B| / N$}
		\FOR {$i \in [L_t]$}
		\STATE {Compute $\mathbf{g}_{t}\left(x_{i}\right) \leftarrow \nabla_{W_{t}} \mathcal{L}(\left(W_{t}, x_{i}\right)$}
		\STATE {$\overline{\mathbf{g}}_{t}(x_{i}) \leftarrow {\mathbf{g}}_{t}(x_{i}) / \max \left(1, \frac{\left\|{\mathbf{g}}_{t}(x_{i})\right\|_{2}}{c}\right)$}
		\ENDFOR
		\STATE Accumulate the clipped gradients over a batch $ \hat{\mathbf{g}}_t = \sum_i^{|B|}\overline{\mathbf{g}}_{t}(x_{i})  $
		\IF{$\sigma < \sigma_0$ and $W_t$ is linear layer or Transformer block}
			\STATE Clip the batch gradient $ \overline{\mathbf{g}}_t = \hat{\mathbf{g}}_t / \max \left(1, \frac{\left\|\hat{\mathbf{g}}_t\right\|_{2}}{c^\prime}\right) $
			\STATE {Add noise $\tilde{\mathbf{g}}_{t} = \frac{1}{|B|}\left(\overline{\mathbf{g}}_t + z \right)$, where $z \sim \mathcal{N}(0, \sigma^2c^2I)$}
			\STATE{$W_{t+1} \leftarrow P \left[W_{t}-\eta_{t} \tilde{\mathbf{g}}_{t}\right] $}
		\ELSE
			\STATE {Add noise $\tilde{\mathbf{g}}_{t} = \frac{1}{|B|}\left(\hat{\mathbf{g}}_t + z \right)$, where $z \sim \mathcal{N}(0, \sigma_0^2c^2I)$}
			\STATE{$W_{t+1} \leftarrow W_{t}-\eta_{t} \tilde{\mathbf{g}}_{t} $}
		\ENDIF
		\ENDFOR
		\RETURN {$W_{T+1}$}
	\end{algorithmic} 
\end{algorithm}

\subsection{Privacy Budget Accountant}
In assessing the privacy cost of DPSGD, the Moments Accountant \cite{abadi2016deep}, and the Renyi Differential Privacy (RDP) Accountant \cite{mironov2019renyi} are widely recognized methods. These approaches accumulate the logarithm of the moments of the privacy loss variable at each iteration. However, they cannot accurately estimate the privacy budget for the shuffled DPSGD due to the incompatibility of the two accountant methods with the computation involving mixture of Gaussian. Specifically, these algorithms both require an exponential parameter, denoted as $\alpha$ in RDP and $\lambda$ in Moments Accountant. The issue emerges when $\alpha$ or $\lambda$ is a non-integer, leading to difficulties in expressing the PDF of the mixture of Gaussian. Even for integers, it demands the trouble to calculate multivariate expansions for the exponential parameter. All these complication hinders us from adopting their accountant methods.

We have also considered privacy accountant methods such as PLDs \cite{pld1} and PRVs \cite{pld2}. However, these techniques also encounter issues in computing Gaussian mixture. For example, the PRVs method optimizes the privacy budget by subsampling, but it produces results that are contradictory to the expected behavior of Gaussian mixture, yielding higher privacy costs under fixed noise. This is mainly due to the Fast Fourier Transform (FFT) transformation is designed for Gaussian distribution, not applicable for the composition of Gaussian mixture. 

Therefore, we consider a general composition method that is irrelevant to the underlying noise distribution. Hence we adopt the Advanced Composition from Thm.~\ref{thm:advanced_composition} and the Privacy Amplification by Subsampling from Thm.~\ref{thm:subsample} together for calculating the privacy budget per training step. Specifically, we set an identical privacy budget across all training steps, i.e., $\epsilon_j = \epsilon_s, \delta_j = \delta_s, \forall j \in [k]$. The sampling rate $p = |B| / N$. By substituting $\epsilon = \epsilon_s, \delta=\delta_s$ into Eq.~\eqref{eq:shuffledp}, we can compute the noise variance $\sigma^{2}$ required for each training step under shuffling.

\begin{table*}[htbp]
	\centering	
	\caption{Model configurations and training hyperparameters. Shuffle ratio means the proportion of the weights that are shuffled by our mechanism in the model. }
	\renewcommand\arraystretch{1.0}
	\label{tab: training settings}
	\begin{tabular}{cccccccc}
		\toprule[1.5pt]
		Model & \#Trainable Param  & Shuffle Ratio & Dataset & \#Train Samples $N$ & Learning Rate & Epochs & $\delta$ \\ \midrule
		ViT & 85.8M & 100\%  & \multicolumn{1}{|c}{CIFAR100} & \multicolumn{1}{c}{50,000} &  $2\times 10^{-3}$  & 10  & $5 \times 10^{-6}$  \\ \midrule
		\multicolumn{1}{c}{\multirow{4}{*}{\parbox{1.6cm}{\centering{BERT \\  RoBERTa}}}} & \multicolumn{1}{c}{\multirow{4}{*}{\parbox{1cm}{110M \\ 125M}}} & \multicolumn{1}{c|}{\multirow{4}{*}{\parbox{1cm}{100\% \\ 100\%}}} & SST-2 & 67,349 & $5\times 10^{-4}$  & 3  & $1/(2N)$ \\ 
		\multicolumn{1}{c}{}& \multicolumn{1}{c}{}&\multicolumn{1}{c|}{} & QQP   & 363,846 &  $5\times 10^{-4}$             & 18  &$1/(2N)$  \\ 
		\multicolumn{1}{c}{}&\multicolumn{1}{c}{}&\multicolumn{1}{c|}{} &MNLI & 392,702 &   $5\times 10^{-4}$              & 18  & $1/(2N)$  \\ 
		\multicolumn{1}{c}{}&\multicolumn{1}{c}{}&\multicolumn{1}{c|}{} & QNLI  & 108,000 &  $5\times 10^{-4}$              & 6 & $1/(2N)$   \\ \midrule
		\multicolumn{1}{c}{\multirow{2}{*}{\parbox{1.8cm}{\centering{GPT-2\\ GPT-2-large}}}} & \multicolumn{1}{c}{\multirow{2}{*}{\parbox{1cm}{124M \\0.838B}}} & \multicolumn{1}{c|}{\multirow{2}{*}{\parbox{1cm}{100\% \\100\%}}} & E2E & 42,043 &  0.002              & 10 &  $8 \times 10^{-6}$  \\
		\multicolumn{1}{c}{}&\multicolumn{1}{c}{}&\multicolumn{1}{c|}{} & DART & 60,591&  $5\times 10^{-4}$ & 15   & $10^{-5}$ \\
		\bottomrule[1.5pt]
	\end{tabular}
\end{table*}

\begin{table*}[t]
	\centering	
	\caption{Verifying permutation invariance on ViT and BERT in non-private and DPSGD modes. }
	\label{tab: permutation invariance}
	\begin{tabular}{ccccccccc}
		\toprule[1.5pt]
		Method		   & \multicolumn{2}{c}{non-private} & \multicolumn{2}{c}{shuffled non-private} & \multicolumn{2}{c}{DPSGD} & \multicolumn{2}{c}{shuffled with same $\sigma$} \\
		& test loss & test accuracy & test loss      & test accuracy  & test loss & test accuracy & test loss      & test accuracy   \\ \midrule
		ViT &      0.6362     &   89.68\%  & 0.6362     &   89.68\%   &      1.330     &   85.56\%  & 1.330     &   85.56\%            \\
		BERT           &      0.2302     &    92.31\%  &    0.2302     &    92.31\%  &      1.054     &    87.38\%  &    1.054     &    87.38\%  \\
		\bottomrule[1.5pt]
	\end{tabular}
\end{table*}

\section{Experiments}
We verify the theorems and evaluate the performance of the shuffled DPSGD in this section.  The experiments are designed to address the following key research questions: (1) Does the shuffling mechanism guarantee weights permutation invariance? (2) How does the F-W method perform in approximating the sum of log-normal distributions? (3) What is the relationship among $\sigma, \varepsilon, d$ in the shuffled setting? (4) How does the accuracy of large models improve by Shuffled DPSGD? (5) How does Shuffled DPSGD perform under auditing? (6) What is the overhead introduced by the shuffling mechanism? The answers to these questions are listed from Sec.~\ref{sec:perin} to Sec.~\ref{sec:overhead}.

\subsection{Experimental Setup}\label{sec:setup}

\textbf{Models, datasets and tasks.}
We conduct comprehensive experiments on a range of models, datasets, and tasks in both CV and NLP fields. The first category of tasks is CV classification, the second being NLP classification and the third one is text generation. For CV classification, we train 
ViT \cite{Dosovitskiy2021vit} to classify the CIFAR-100 \cite{krizhevsky2009learning} dataset. For NLP classification tasks, we employ two models BERT\cite{Devlin2019bert} and RoBERTa on four datasets: stanford sentiment treebank (SST-2), quora question pairs (QQP),  Question Natural Language Inference (QNLI), and MultiNLI (MNLI) from the GLUE benchmark \cite{wang2019glue}. For the text generation task, we use GPT-2 and GPT-2-large \cite{radford2019language} to generate text given E2E \cite{Novikova2017e2e} and DART \cite{nan2021dart} datasets. ViT is pre-trained on ImageNet, whereas the rest models are adopted from the pre-trained ones in the Huggingface library. We summarize the configuration and training/fine-tuning hyperparameters in Table \ref{tab: training settings}. 


\textbf{Metrics.}
For the classification tasks of CV and NLP, we use the testing accuracy as the evaluation metric. To verify the weights permutation invariance of the shuffling mechanism, we additionally include loss as the metric. As to the text generation tasks, BLEU and Meteor are employed as the evaluation metrics. Except for the loss, a higher value indicates a better performance for all metrics.

\textbf{Baselines.}
We introduce the state-of-the-art works on DPSGD as baselines, which are Ghost clipping \cite{li2022large} and MixOpt \cite{bu2023differentially}. Ghost clipping and MixOpt are algorithmically optimized for per-sample gradient calculation based on {\tt Opacus} \cite{yousefpour2021opacus} library. {\tt Opacus} is a popular pytorch-based DPSGD code platform, which is continuously updated, and offers three different privacy accountant methods: RDP, GDP and PRV. RDP accountant is adopted in our experiments as the accountant method for all baselines. In ghost clipping, the per-sample gradients is clipped without being instantiated, thus saving both running time and memory costs. MixOpt method is further improved from Book-Keep \cite{bu2022scalable} (A variant of ghostclipping) and provides a lighter running time overhead than Book-Keep.

\textbf{Implementation details.}
We run the experiments on Nvidia GeForce RTX 3090 GPU except for training GPT-2 large on Nvidia A100 GPU. The implementation is Pytorch-based.  
 Baselines are reproduced in their respective environments. In the hyperparameter setting of all NLP tasks, we set the clipping value to 0.1, and fix $\delta$ to $1/(2N)$ where $N$ is the number of training samples in each dataset. For CV tasks, we set the clipping value to 1, and fix $\delta = 5 \times 10^{-6}$. 
Other hyperparameters are listed in Table \ref{tab: training settings}. 


\textbf{Shuffle implementation.}
For the implementation of the shuffling mechanism, we perform permutation by creating random row and column indices for each weight matrix, instead of directly performing matrix multiplications. This can greatly reduce the time overhead of which the results can be found at Sec.~\ref{sec:overhead}. The code detail is provided in Appendix \ref{appendix:shuffle code}.

\subsection{Verifying Permutation Invariance}\label{sec:perin}

We first show the permutation invariance holds by conducting experiments to compare the difference in weights trained under shuffling and that without shuffling. We fix the random seed for a straightforward comparison. By calculating the $\ell_{2}$ norm of the weights difference pre- and post-shuffling on ViT, we observed a significant discrepancy of 1063.02, showing that the shuffling mechanism induces substantial perturbation in the model weights. 

Meanwhile, we verify permutation invariance by training ViT on CIFAR100 and Bert on SST-2 in both non-private mode and differentially-private mode. The same noise multiplier $\sigma$ (equivalent to $\varepsilon=1$ in normal DPSGD) is applied and the random seed is fixed. The training results with or without shuffling are listed in Table~\ref{tab: permutation invariance}, showing exactly the same test loss and accuracy with shuffling or not. It is evidenced that applying the proposed shuffling mechanism results in zero difference between the shuffled and original model, indicating that the model is trained equivalently under shuffling.


\begin{figure}[htbp]
	\centering
	\includegraphics[width=0.7\linewidth]{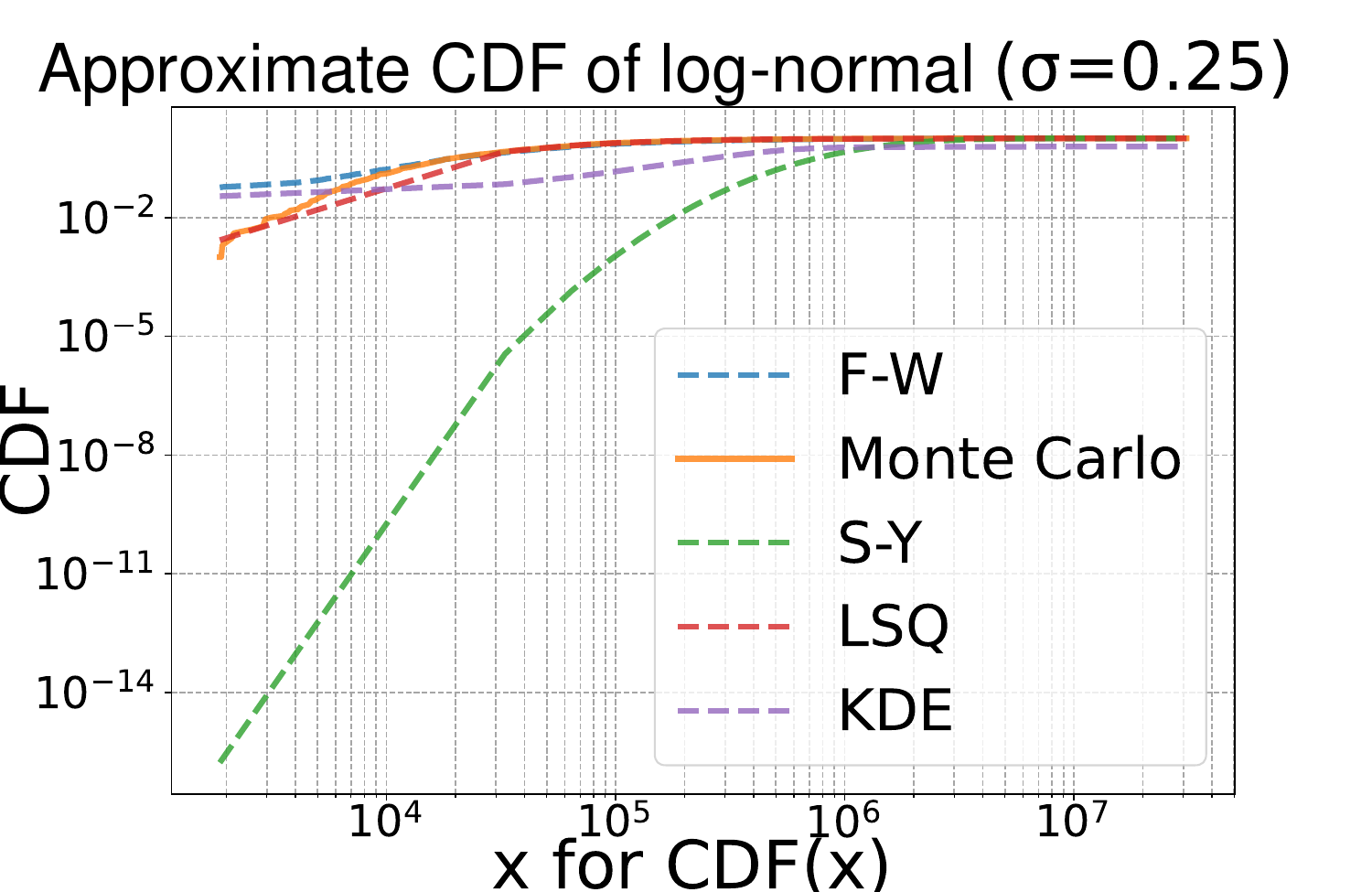}
	\caption{The comparison of different approximation methods for the sum of log-normal distribution. We set $\sigma=0.25$ and dimension $d = 10^{8}$ to depict its CDF curve. }
	\label{fig:lognormal_approximation}
\end{figure}

\begin{table*}[t]
	\centering
	\caption{Accuracy (\%) comparison of Shuffled DPSGD, Ghost clipping and MixOpt on CIFAR-100 classification tasks. The optimal performance is marked in bold.}
	\label{tab: result of cv}
	\tabcolsep=1.5pt
	\begin{tabular}{ccccccccccc}
		\toprule[1.5pt]
		\multirow{2}{*}{Model} &             & \multicolumn{3}{c}{$\varepsilon=0.1$ }                & \multicolumn{3}{c}{$\varepsilon=0.5$ }                  & \multicolumn{3}{c}{$\varepsilon=1$  }                \\
		& Non-private & Ghost & MixOpt  & Shuffled-DPSGD   & Ghost & MixOpt  & Shuffled-DPSGD   & Ghost & MixOpt  & Shuffled-DPSGD   \\ \midrule
		ViT                    & 89.68     & 1.20       & 1.26 & \textbf{67.31} & 72.99 & 72.98 & \textbf{75.17} & 75.53       & 75.55 & \textbf{75.61}
		\\ \bottomrule[1.5pt]    
		\end{tabular}
\end{table*}

\begin{table*}[t]
	\centering
	\caption{Accuracy (\%) comparison of Shuffled DPSGD, Ghost clipping and MixOpt on NLP classification tasks. The optimal performance is marked in bold.}
	\label{tab: result of nlp classification}
	\tabcolsep=1.5pt
		\begin{tabular}{cccccccccccc}
			\toprule[1.5pt]
			\multirow{2}{*}{Model}   & \multirow{2}{*}{Datasets} & \multirow{2}{*}{Non-private} & \multicolumn{3}{c}{$\varepsilon=0.1$} & \multicolumn{3}{c}{$\varepsilon=0.5$} & \multicolumn{3}{c}{$\varepsilon=1$} \\
									 &                           &                              & Ghost   & MixOpt   & Shuffled-DPSGD   & Ghost   & MixOpt  & Shuffled-DPSGD  & Ghost   & MixOpt  & Shuffled-DPSGD  \\ \midrule
									 \multirow{4}{*}{BERT}    & SST-2                     & 92.32                        & 69.27   & 64.56    & \textbf{87.5}    & 86.58   & 87.38    & \textbf{88.42}   & 88.19   & 86.93   & \textbf{88.30}  \\
									 & QQP                       & 90.49                        & 69.51   & 69.73    & \textbf{82.14}   & 80.85   & 80.51    & \textbf{82.25}   & \textbf{83.39 }  & 82.07   & {82.26}  \\
									 & MNLI                      & 83.27                        & 55.65   & 55.58    & \textbf{73.04}   & 70.33   & 70.12    & \textbf{73.05}   & \textbf{73.52}   & 73.25   & {73.07}  \\
									 & QNLI                      & 89.51                        & 51.29   & 59.69    & \textbf{81.49}   & 81.03   & 81.64    & \textbf{82.04}   & 81.77   & \textbf{81.82}   & {81.78}  \\ \midrule
			\multirow{4}{*}{RoBERTa} & SST-2                     & 94.95                        & 87.50   & 87.62    & \textbf{92.32}   & 90.82   & \textbf{92.08 }  & {91.63}   & 91.86   & 91.62   & \textbf{91.51}  \\
									 & QQP                       & 91.50                        & 68.19   & 68.28    & \textbf{83.59}   & 80.67   & 80.51    & \textbf{83.37}   & 83.39   & 83.20   & \textbf{83.35}  \\
									 & MNLI                      & 87.49                        & 55.11   & 53.03    & \textbf{79.29}   & 78.14   & 78.01    & \textbf{79. 21}  & 79.78   & 79.84   & \textbf{79.38}  \\
									 & QNLI                      & 93.56                        & 73.86   & 73.86    & \textbf{85.67}   & 85.22   & 85.92    & \textbf{85.94}   & 85.59   & 86.14   & \textbf{85.76} \\
									 \bottomrule[1.5pt]  
			\end{tabular}
\end{table*}

\subsection{Sum of Lognormal Approximations}
We empirically evaluate the approximation error of the F-W method \cite{Fenton1960} used in our mechanism, and compare it with different approximation methods including Monte-carlo method, S-Y method, LSQ method \cite{lognormal} and KDE method \cite{Askin22KDE}. Monte-carlo method calculates the sum of randomly sampled log-normal variables and repeats the procedure for 1000 times to depict the CDF curve. 
S-Y method computes the distribution for the sum of log-normal iteratively. It first computes the distribution for the sum of two log-normal random variables, and then adds the third one up to compute the new distribution. The summation goes iteratively until all $d$ log-normal random variables are summed up. LSQ method leverages the least squares method to fit the sum of log-normal distributions, whereas KDE method adopts the Gaussian kernel density estimation method to fit the sum of log-normal distributions. 

Here we consider the Monte-carlo method to be the closest to the groundtruth distribution, but the method demands a very large number of samples to give an accurate estimate. For example, sampling 1000 points for $d=10^8$ took 62 minutes, which is hardly fit for DPSGD. Except for the F-W method, all other methods are numerical without any analytical form of the CDF. Furthermore, the LSQ and KDE methods both depend on sampling which is too heavy-weighted even if cutting down the number of samples.

The results are shown in Fig.~\ref{fig:lognormal_approximation}. As we can see, the F-W method approximation of the sum of log-normal distribution is sufficiently close to the `ground truth' (Monte-carlo). S-Y method provides poor performance at a lower value. Although the LSQ method and KDE method have better performance, they require sampling similar to Monte-carlo and do not have a closed-form expression. Therefore, we conclude that the F-W method is the best choice for the approximation of the sum of log-normal in our design.

\begin{figure}[htbp]
	\centering
	\includegraphics[width=0.99\linewidth]{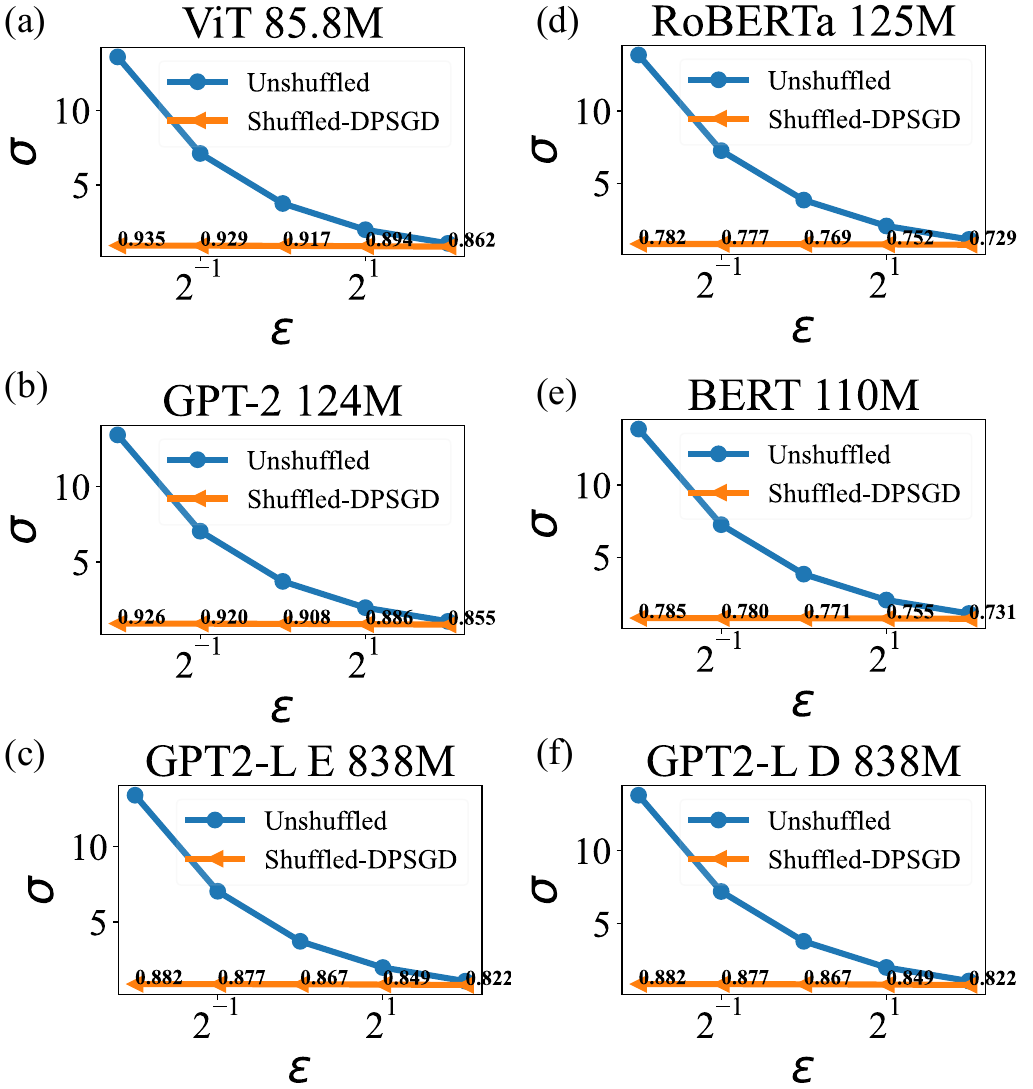}
	\caption{Comparison of the relations between $\sigma$ and $\varepsilon$ in both unshuffled and shuffled DPSGD. For (a)-(e) we calculate different $\sigma$s with $\varepsilon \in \{0.25, 0.5, 1, 2, 4\}$. And "GPT2-L E" represent the GPT-2-large model trained with E2E dataset and "GPT2-L D" represent the GPT-2-large model trained with DART dataset. }
	\label{fig:eps_and_sigma}
\end{figure}

\begin{table*}[t]
	\centering
	\caption{Comparison of Shuffled DPSGD, Ghost clipping and MixOpt on GPT-2 and GPT-2-large. The optimal performance is marked in bold.}
	\label{tab: NLP GPT-2}
	\tabcolsep=1.5pt
\begin{tabular}{cccccccccccc}
	\toprule[1.5pt]
	\multirow{2}{*}{Models}      & \multirow{2}{*}{Datasets} & \multirow{2}{*}{Non-private} & \multicolumn{3}{c}{$\varepsilon=0.1$} & \multicolumn{3}{c}{$\varepsilon=0.5$} & \multicolumn{3}{c}{$\varepsilon=1$} \\ 
								 &                           &                              & Ghost    & MixOpt  & Shuffled-DPSGD   & Ghost   & MixOpt  & Shuffled-DPSGD  & Ghost   & MixOpt  & Shuffled-DPSGD  \\ \midrule
								 \multirow{4}{*}{GPT-2}       & DART(BLEU)                & 62.00                        & 1.91     & 0.92    & \textbf{38.73}   & 23.71    & 33.51   & \textbf{40.56}   & 40.94   & 32.49   & \textbf{42.68}  \\
								 & DART (METEOR)             & 0.474                        & 0.005    & 0.008   & \textbf{0.333}   & 0.298    & 0.318   & \textbf{0.3612}  & 0.356   & 0.341   & \textbf{0.3859} \\
								 & E2E (BLEU)                & 66.75                        & 0.18     & 0.01    & \textbf{45.93}   & 35.80    & 44.79   & \textbf{49.77}   & 53.80   & 46.29   & \textbf{54.93}  \\
								 & E2E (METEOR)              & 0.4537                       & 0.0201   & 0.0108  & \textbf{0.3119}  & 0.2877   & 0.3091  & \textbf{0.3286}  & 0.3328  & 0.3174  & \textbf{0.3436} \\ \midrule
	\multirow{4}{*}{GPT-2-large} & DART(BLEU)                & 66.28                        & 5.74     & 4.73    & \textbf{53.45}   & 47.28    & 46.27   & \textbf{54.77}   & 52.36   & 52.63   & \textbf{53.97}  \\
								 & DART (METEOR)             & 0.521                        & 0.073    & 0.045   & \textbf{0.4570}  & 0.423    & 0.418   & \textbf{0.4791}  & 0.449   & 0.450   & \textbf{0.4936} \\
								 & E2E (BLEU)                & 68.80                        & 0.21     & 0.09    & \textbf{63.46}   & 30.97    & 33.82   & \textbf{64.51}   & 45.42   & 45.55   & \textbf{65.02}  \\
								 & E2E (METEOR)              & 0.4625                       & 0.0549   & 0.0487  & \textbf{0.3986}  & 0.2410   & 0.2590  & \textbf{0.4195}  & 0.3180  & 0.3231  & \textbf{0.4276} \\
								 \bottomrule[1.5pt]
	\end{tabular}
\end{table*}


\subsection{Relations of $\sigma, \varepsilon,$ \& $d$}\label{sec:exp_eps_sigma}

The theoretical relations of $\sigma, \varepsilon, d$ provided in Thm.~\ref{thm:dpshuffled} is too complicated to tell straightforwardly. Hence we numerically show how the three key parameters interact in the real settings.


Due to space limitation, we only display results of six representative experiments, each with parameter size $d =$ 85.8M, 110M, 125M, 124M, 838M, respectively. The $\sigma-\varepsilon$ curves are depicted in Fig. \ref{fig:eps_and_sigma}. We observe that the shuffling mechanism has a significant impact on the privacy accountant of DPSGD across all model sizes. From the figure, the $\sigma$s almost flattened out across different $\varepsilon$s in shuffled mechanism, leaving other factors like $d$ to blame for its change in values. As we focus on the high privacy regime ($\varepsilon$ as low as $0.25$), we observe that, the $\sigma$ in the shuffled DPSGD is merely 5-7\% of that of the unshuffled.

It may be attributed to that the noise magnitude $\sigma$ at the order $O(1/\sqrt{\log d})$, which is very small at extremely large $d$s. We can tell that $d$ mainly appears along with $e^{\frac{1}{\sigma^2}}$ as $(d-1)e^{-\frac{d}{(d-1)\sigma^2}} ~\text{in}~ \sigma_{Z_i}, \zeta_{i}$, where $i=1,2$. Therefore, if the model's dimension reaches $10^{9}$, the $\sigma$ is dominated by $d$ but not by the $\varepsilon$.

For a more straightforward demonstration, we show the heatmap in Fig.~\ref{fig:heatmap} to present the relations of $\varepsilon, \sigma$ and dimensionality for shuffled DPSGD. The column of $d =1$ represents the $\sigma$ of an unshuffled mechanism. In the unshuffled case, $\sigma$ increases fast with the decrease of $\varepsilon$. However, as dimensionality $d$ increases, $\sigma$ drops drastically and the change in $\varepsilon$ barely affects $\sigma$. Therefore, in the shuffled case, $d$ becomes the major impact factor to the value of $\sigma$.


\begin{figure}[htbp]
	\centering
	\includegraphics[width=0.95\linewidth]{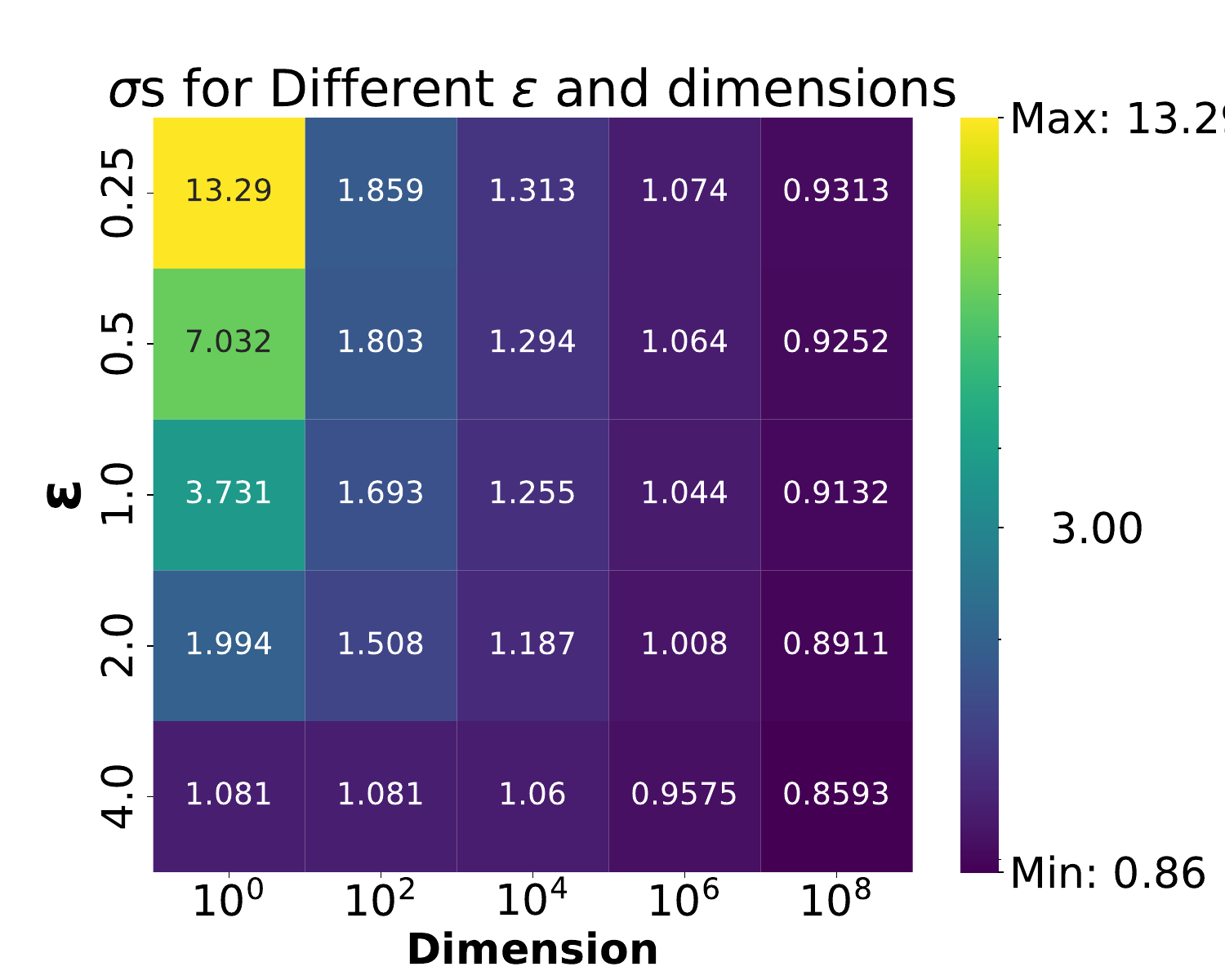}
	\caption{The $\sigma$s from the unshuffled ($d=1$) and shuffled DPSGD under different $\varepsilon$s and $d$s.}
	\label{fig:heatmap}
\end{figure}

\subsection{Performance of Shuffled DPSGD}
We evaluate the real-world performance of shuffled DPSGD in comparison with the baselines on large-scale models. The results are reported according to the categories of the tasks.

\textbf{CV Classification.}
The setup of ViT follow the ViT checkpoint \texttt{google/vit} \texttt{-base-patch16-224} from Huggingface transformers library \cite{wolf2020transformers}, pretrained on ImageNet-21k and fine-tuned on ImageNet (also referred to as ILSVRC2012). Since this pre-trained model is very powerful, easily reaching over $96\%$ accuracy under all circumstances on CIFAR-10,  we hence omit the results but select a more difficult task --- 100-class classification on CIFAR-100 --- for performance test. The results in Table~\ref{tab: result of cv} show that our shuffled DPSGD consistently outperforms all baselines under various $\varepsilon$s, and the improvement is more significant at small $\varepsilon$ values.  Particularly, at $\varepsilon = 0.1$ where both Ghost clipping and MixOpt have performance close to random guesses, shuffled DPSGD enjoys a high accuracy well over $60\%$. 

However, such an improvement becomes marginal at large $\varepsilon$s. As we analyze, this is mostly due to the inexact approximation of F-W method at small $\sigma$s. In Thm.~\ref{thm:dpshuffled}, the variance of per normal distribution in the sum of log-normals is approximated as $\frac{c^2}{\sigma^2}$, which grows with small $\sigma$s. As stated in \cite{Zhang_2022}, the F-W approximation method is quite accurate when the variance of log-normal distributions is low but becomes less accurate at a high variance. Hence, at small $\sigma$s, or large $\varepsilon$s, the power of the approximation approach diminishes.





\textbf{NLP Classification.} We finetune a raw BERT checkpoint \texttt{bert-} \texttt{base-uncased} and a RoBERTa checkpoint \texttt{roberta-base} in a differentially-private way for NLP classification, in which both pre-trained models are available in the Huggingface library. The results under all cases are presented in Table~\ref{tab: result of nlp classification}. The trend is similar to CV classification, as our method almost always achieves higher accuracy than baselines at the same privacy level. Notably, our accuracy is merely $1-4\%$ shy of the non-private case, showing great applicability of our method to privacy-preserving NLP tasks. In fact, in NLP tasks, quite different from the CV task, the accuracy of our method is only mildly affected by the $\varepsilon$ value. This reflects the results of Sec.~\ref{sec:exp_eps_sigma} where $\varepsilon$'s impact on $\sigma$ gets weakened in the high-dimensional case.

\textbf{Table-To-Text Generation.} 	
We fine-tune the pre-trained model checkpoint \texttt{gpt2} and \texttt{gpt2-large} from Huggingface in a differentially-private way. The  results are reported in Table \ref{tab: NLP GPT-2}. It can be found that the performance of the shuffled DPSGD is far superior to baselines, especially at $\varepsilon=0.1$. This is in line with the analysis that the accuracy performance of our method is less affected by the limited privacy budget thanks to shuffling. It is also clear that the performance gap between the shuffled DPSGD and the non-private method is smaller on GPT-2-large compared to GPT-2. In fact, at $\varepsilon=1$, the performance of our method on GPT-2-large is very close to that of the non-private case, demonstrating that our shuffling approach enjoys a unique advantage on large models, verifying the intuition that the shuffled DPSGD benefits from high-dimensional model weights.


\begin{table}[htbp]
	\centering
	\caption{Auditing results of shuffled DPSGD on ViT, CIFAR100.}
	\label{tab: aduting}
	\begin{tabular}{ccc}
	\toprule[1.5pt]
		Theoretical $\varepsilon$  & 0.5          & 1.0    \\
		Audited $\varepsilon$ & 0.4771      & 0.7005
	     \\

	\bottomrule[1.5pt]
	\end{tabular}
\end{table}

\subsection{Privacy Aduting of Shuffled DPSGD}
So far, our privacy accountant relies on some relaxations and approximations. We would like to see how our mechanism acts against real-world auditing. Hence we implement the auditing method for DPSGD in \cite{nasr2023tight}, which presents the most recent progress in DPSGD auditing as far as we know. We conduct auditing tests in the training of ViT on CIFAR-100 under the theoretical $\varepsilon= 0.5, 1$ by shuffled DPSGD. A total of 10,000 training steps is performed to get the records through `white-box access with gradient canaries.' The setup of `Dirac canary: All gradient values are zero except at a single index' in \cite{nasr2023tight} is reused in our test with a batch size 1000 and epoch number 200 to get sufficient observations.

We apply the auditing method to the shuffled DPSGD and obtain the auditing $\varepsilon$s as
\begin{equation}
	\varepsilon =\max \left\{\ln \left(\frac{1-{\alpha}-\delta}{{\beta}}\right), \ln \left(\frac{1-{\beta}-\delta}{{\alpha}}\right), 0\right\}
\end{equation}
where $\alpha$ and $\beta$ represent the type I error and type II error, respectively. 
We compare the empirical $\varepsilon$s with the theoretical values in Table~\ref{tab: aduting}. For $\varepsilon = 0.5$, we eliminate cases where $\alpha < 4 \times 10^{-4}$ and $\beta < 4 \times 10^{-4}$ to mitigate the impact of inaccurate measurement of the small probability events. The audited $\varepsilon$ is a reasonably close lower bound to the theoretical counterparts.

This indicates that our privacy accountant for shuffled DPSGD provides reasonably accurate estimations. However, there is still a discrepancy between the theoretical and empirical $\varepsilon$ values, which can be attributed to the approximations in the theoretical analysis and the fact that the audited privacy level is mostly a lower bound to the actual privacy level. Nevertheless, considering the limited observations ($10^4$), the auditing results is acceptable as the gap between the audited and the theoretical upper bound is not closed in \cite{nasr2023tight} either.



\begin{table}[htbp]
	\centering
	\caption{Time costs of different shuffling implementations.}
	\label{tab: shuffle time}
	\begin{tabular}{cc}
		\toprule[1.5pt]
		method                    & Times(ms) \\ \midrule
		Index shuffle             & 0.9799       \\
		Matrix multiplication   & 1316.6   \\
		Sparse matrix multiplication & 322.62   \\\bottomrule[1.5pt]
	\end{tabular}
\end{table}

\begin{table}[htbp]
	\centering
	\caption{Average epoch time of running shuffled DPSGD.}
	\label{tab: overhead}
	\begin{tabular}{ccc}
		\toprule[1.5pt]
		Model   & Unshuffled & Shuffled \\ \midrule
		ViT, CIFAR  &     2min25s      & 2min42s    \\
		BERT, QNLI & 28min28s & 28min53s \\
		RoBERTa, QNLI & 29min45s & 30min11s \\
		GPT-2, E2E & 30min1s & 30min13s \\
		GPT-2-large, E2E (on A100) & 43min25s & 44min29s \\
		\bottomrule[1.5pt]
	\end{tabular}		
\end{table}

\subsection{Overhead of Shuffled DPSGD} \label{sec:overhead}
We compare the overhead of shuffling under different implementations: index operations (our approach), matrix multiplication (\texttt{torch.mm}), and sparse matrix multiplication (\texttt{torch.sparse.mm}) in randomly shuffling a $10,000 \times 10,000$ matrix. We found that index operations greatly save the running time, which is about 0.1\% of that by matrix multiplication and 0.3\% of that by sparse matrix multiplication. The results are in Table \ref{tab: shuffle time}.

The overall computational overhead of shuffled DPSGD is measured by the average epoch running time as reported in Table~\ref{tab: overhead}. All experiments are conducted on a single RTX 3090 GPU except for GPT-2 large. We can see that the additional overhead of shuffling is under 30 seconds except for GPT-2-large, which is 64 seconds, mostly due to its billion-level parameters and the enormous permutation space. Nevertheless, the shuffling overhead takes only 2.3\% of the overall training time in shuffled DPSGD, which is almost negligible in training.

\section{Conclusion}
We developed the theoretical condition for $(\epsilon, \delta)$-DP to hold under shuffled Gaussian mechanism. Based on the theory, we proposed shuffled DPSGD, a method that enhances the accuracy of DPSGD by incorporating a shuffling mechanism to reduce the additive noise, thus boosting the overall model performance. Our shuffling mechanism, designed to maintain weights permutation invariance, is adaptable to a variety of up-to-date neural networks. Experimental results confirmed our findings, and showed that shuffled DPSGD significantly improves performance over baselines on large models, highlighting its lightweighted implementation and running time overhead.

\bibliographystyle{arxiv/IEEEtranS}
\bibliography{arxiv/main.bib}

\cleardoublepage
\appendix
\section{Proofs of Theorems }
\subsection{Proof of Theorem 1}
\label{proof:theorem1}

\begin{proof}
   
    As we have
    \begin{align}
        P(W)=\{P_{1}^{\top} W_t, ~b_t P_1,~W_{t+1} P_1 ,~ ~b_{t+1}\},
    \end{align}
    
    we can get the following equations by substituting the above into Eq.~\ref{eq:lineareq1} and \ref{eq:lineareq2}:
        \begin{align}
        & h(x_{t} W_{t}^{\top} P_1 + b_{t} P_1) = x_{t+1}P_1,\\
        & h(x_{t+1} P_1 P_1^{\top} W_{t+1}^{\top} + b_{t+1}) = x_{t+2}.
        \end{align}
    Therefore, the forwarding result is still $x_{t+2}$, satisfying forward invariance.

    For the backward propagation, we can get the gradients as:
    \begin{align}
        & \frac{\partial x_{t+2}}{\partial W_{t+1} P_1} = \frac{\partial x_{t+2}}{\partial W_{t+1}} \frac{\partial W_{t+1}}{\partial W_{t+1} P_1}  = \frac{\partial x_{t+2}}{\partial W_{t+1}} P_1,\\
        & \frac{\partial x_{t+2}}{\partial b_{t+1}} = \frac{\partial x_{t+2}}{\partial b_{t+1}}\\
        & \frac{\partial x_{t+2}}{\partial P_{1}^{\top} W_{t}} = \frac{\partial x_{t+2}}{\partial W_{t} } \frac{\partial W_{t}}{\partial P_{1}^{\top} W_{t}} = P_1^{\top} \frac{\partial x_{t+2}}{\partial W_{t} } ,\\
        & \frac{\partial x_{t+2}}{\partial b_{t} P_1} = \frac{\partial x_{t+2}}{\partial b_{t}} \frac{\partial b_{t} }{\partial  b_{t} P_1} = \frac{\partial x_{t+2}}{\partial b_{t}} P_1.        
    \end{align}
    Therefore, the gradients have the same permutation order with the weights, and thus the results of the backward propagation are not changed.
\end{proof}

\subsection{Proof of Theorem 2}
\label{proof:theorem2}
\begin{proof}
    
    For the permuted attention block, we simplify the inner product of QK to get:
    \begin{equation}
        W_i^{Q} P_{1i} P_{1i}^{\top} W_i^{K \top} = W_i^{Q} W_i^{K \top}.
    \end{equation}
    
    Hence, we get:
  
    \begin{align}
        \operatorname{softmax}\left(\frac{X_t W_i^{Q} W_i^{K \top}X_t^{\top}}{\sqrt{d_k}}\right) X_t W_i^{V} P_2 = A_i P_2.
    \end{align}
    
    Combining $A_i$s together according to the definition of attention, and multiplying the result with $W^O$, we have:
   
    \begin{align}
        \left[ A_1 P_2, \ldots, A_h P_2 \right] P_3 W^{O}
        = \left[ A_1, \ldots, A_h \right] W^{O} = X_{t+1}.
    \end{align}
    Therefore, we can get the result of shuffling does not alter from the original one in the forwarding pass. 
    
    For the backward propagation, we get that the gradients as:
    \begin{align}
        & \frac{\partial X_{t+1}}{\partial W_i^{Q} P_{1i}} = \frac{\partial X_{t+1}}{\partial W_i^{Q}} \frac{\partial W_i^{Q}}{\partial W_i^{Q} P_1}  = \frac{\partial X_{t+1}}{\partial W_i^{Q}} P_{1i},\\
        & \frac{\partial X_{t+1}}{\partial W_i^{K} P_{1i}} = \frac{\partial X_{t+1}}{\partial W_i^{K}} \frac{\partial W_i^{K}}{\partial W_i^{K} P_{1i}}  = \frac{\partial X_{t+1}}{\partial W_i^{K}} P_{1i},\\
        & \frac{\partial X_{t+1}}{\partial W_i^{V} P_2} = \frac{\partial X_{t+1}}{\partial W_i^{V}} \frac{\partial W_i^{V}}{\partial W_i^{V} P_2}  = \frac{\partial X_{t+1}}{\partial W_i^{V}} P_2,\\
        & \frac{\partial X_{t+1}}{\partial P_3 W^{O} } = \frac{\partial X_{t+1}}{\partial W^{O}} \frac{\partial W^{O}}{\partial P_3 W^{O} }  = P_3 \frac{\partial X_{t+1}}{\partial W^{O}} .
    \end{align}
    Therefore, the gradients share the same permutation order with the weights, demonstrating backward invariance in the weight update.
    
\end{proof}

\subsection{Proof of Theorem \ref{thm:dpshuffled}}\label{appendix:theorem3}
    \begin{lemma}\label{lemma:inequality}
        Assuming $a_1, a_2, \ldots, a_k$ and $b_1, b_2, \ldots, b_k$ are positive real numbers, we have
        \begin{align}
            \sum_{i=1}^{k} a_i\sum_{i=1}^{k} b_i \geq \sum_{i=1}^{k} a_i b_i.
        \end{align}
    \end{lemma}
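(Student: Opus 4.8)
The statement to prove is Lemma~\ref{lemma:inequality}: for positive reals $a_1,\dots,a_k$ and $b_1,\dots,b_k$,
\begin{equation}
\left(\sum_{i=1}^{k} a_i\right)\left(\sum_{i=1}^{k} b_i\right) \ge \sum_{i=1}^{k} a_i b_i. \notag
\end{equation}
The plan is to expand the left-hand side and compare it term by term with the right-hand side. Writing out the product, $\left(\sum_i a_i\right)\left(\sum_j b_j\right) = \sum_{i}\sum_{j} a_i b_j$, which splits into the diagonal part $\sum_{i} a_i b_i$ and the off-diagonal part $\sum_{i\ne j} a_i b_j$. Since every $a_i$ and every $b_j$ is positive, each off-diagonal term $a_i b_j$ is strictly positive, so the off-diagonal sum is nonnegative. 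Hence $\left(\sum_i a_i\right)\left(\sum_j b_j\right) = \sum_i a_i b_i + \sum_{i \ne j} a_i b_j \ge \sum_i a_i b_i$, which is exactly the claim.

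There is essentially no obstacle here; the only thing to be careful about is the positivity hypothesis, which is what guarantees the dropped cross terms are nonnegative (the inequality would fail for mixed signs, e.g.\ $a=(1,-1)$, $b=(1,-1)$ gives LHS $=0 < 2=$ RHS). In our intended application, the quantities playing the role of $a_i$ and $b_i$ are values of the form $e^{\langle y, g^{(j)}\rangle/\sigma^2}$, which are exponentials and therefore automatically strictly positive, so the hypothesis is met. One could alternatively phrase the argument as a one-line consequence of the fact that $\ell_1$ norms are submultiplicative under the identification of the two sequences, but the direct expansion is cleanest and requires no external machinery. I would present the two-line expansion argument and then note in a sentence how the lemma feeds into the sufficient condition for $y \in O^*$ used in the proof of Theorem~\ref{thm:dpshuffled}.
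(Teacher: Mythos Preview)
Your proposal is correct and matches the paper's own proof essentially line for line: expand the product as $\sum_i a_i b_i + \sum_{i\ne j} a_i b_j$ and use positivity of the off-diagonal terms. The only cosmetic difference is that the paper records the inequality as strict (which indeed holds for $k\ge 2$), whereas you state it with $\ge$; either version suffices for the application in Theorem~\ref{thm:dpshuffled}.
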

    \begin{proof}
        Obviously,
        \begin{align}
            &\sum_{i=1}^{k} a_i\sum_{i=1}^{k} b_i = \sum_{i=1}^{k} a_i b_i + \sum_{i\neq j} a_i b_j >  \sum_{i=1}^{k} a_i b_i.
        \end{align}
    \end{proof}
Substituting Eq.~\eqref{eq:22} into the above inequality, we obtain
    \begin{align}
        & \frac{\sum_{i=1}^{k} e^{<y, f^{(i)}>/\sigma^2}}{\sum_{i=1}^{k} e^{( <y, f^{(i)}> + <y, g^{(i)}> )/\sigma^2}} \geq \frac{1}{ \sum_{i=1}^{k} e^{(<y, g^{(i)}> )/\sigma^2}}.
    \end{align}
    Therefore, a sufficient condition for $O^{*}$ is to have
    \begin{align}
        & \frac{1}{ e^{-(\|g\|^2_2+ 2<f, g>)/ (2\sigma^2) }\sum_{i=1}^{k} e^{(<y, g^{(i)}> )/\sigma^2}} \geq e^{\epsilon} \\
        \Leftrightarrow &  e^{ - \epsilon + (\|g\|^2_2+ 2<f, g>)/ (2\sigma^2)} \geq \sum_{i=1}^{k} e^{(<y, g^{(i)}> )/\sigma^2}.
    \end{align}

\subsection{Proof of Lemma~\ref{lemma:increasing}}\label{appendix:lemma3}
\begin{proof}
   In $h(\sigma_{Z_1}, \sigma_{Z_2}) = \Phi \left(\frac{\sigma_{Z_1}}{2} + \frac{\zeta_1 - \epsilon}{\sigma_{Z_1}} \right)- e^{\epsilon}\Phi \left(\frac{\sigma_{Z_2}}{2} + \frac{\zeta_2 - \epsilon}{\sigma_{Z_2}} \right)$, for the definitions of $\zeta_1, \zeta_2, \sigma_{Z_1}, \sigma_{Z_2}$, we can easily derive
    \begin{align}
        & \zeta_1 \le 0, ~~\zeta_2 \le \zeta_1 - \frac{\|g\|_2^2}{\sigma^2},\\
            &  \log \left[\frac{ 1 }{k} (e^{\frac{\|g\|_2^2}{\sigma^2}} -1) + 1 \right]< \sigma_{Z_i}^2 < \frac{\|g\|^2_2}{\sigma^2},~~i=1,2.\label{eq:z1_z2_inequality}
    \end{align}
By analysis, we found $h(\cdot)$ is mostly influenced by the term $
        \sum_{j=1}^{k}e^{2\mu_{ij}},  \sum_{j=1}^{k}e^{\mu_{ij}},
        \sum_{j=1}^{k}e^{2\mu_{ij}^{\prime}},$ and $  \sum_{j=1}^{k}e^{\mu_{ij}^{\prime}}$.
Letting
    \begin{align}
        \bm{\mu} &= \left[ e^{\mu_{i1}},  e^{\mu_{ik}}, \ldots, e^{\mu_{ik}}  \right] \in \mathbb{R}^{k}, \\
        A & = \text{diag}\{ e^{\frac{<g^{(i)}, g^{(1)}>}{\sigma^2}}, e^{\frac{<g^{(i)}, g^{(2)}>}{\sigma^2}}, \ldots,  e^{\frac{<g^{(i)}, g^{(k)}>}{\sigma^2}}\} \in \mathbb{R}^{k \times k},
    \end{align}
    one can rewrite the terms as
    \begin{align}
        \sum_{j=1}^{k}e^{2\mu_{ij}} = \|\bm{\mu}\|^2_2, ~~ \sum_{j=1}^{k}e^{\mu_{ij}} = \|\bm{\mu}\|_1 \\
        \sum_{j=1}^{k}e^{2\mu_{ij}^{\prime}} = \|A\bm{\mu}\|^2_2, ~~ \sum_{j=1}^{k}e^{\mu_{ij}^{\prime}} = \|A\bm{\mu}\|_1.
    \end{align}
    Substituting the above shorthand into the variables, we have
    \begin{align}
        \sigma_{Z_1}^2 &= \log \left[\frac{\|\bm{\mu}\|^2_2}{ \|\bm{\mu}\|_1^2}(e^{\frac{\|g\|^2}{\sigma^2}} -1) + 1\right], \\
        \sigma_{Z_2}^2 &= \log \left[\frac{\|A\bm{\mu}\|^2_2}{ \|A\bm{\mu}\|_1^2}(e^{\frac{\|g\|^2}{\sigma^2}} -1) + 1\right], \\
        \zeta_1 &= -\log(\|\bm{\mu}\|_1) + \frac{<f, g>}{\sigma^2}, \\
        \zeta_2 &= -\log(\|A\bm{\mu}\|_1) + \frac{<f, g>}{\sigma^2}.
    \end{align}
    Since $\sigma_{Z_1}^2$ and $\sigma_{Z_2}^2$ only appear in $h(\cdot)$ as square root magnitudes, their influence is relatively small compared to the changes in $\bm{\mu}$ and $A$. Therefore, we mainly consider the impact of $\zeta_1$ and $\zeta_2$ on $h$. It is not difficult to see that the larger the difference $\zeta_1 - \zeta_2$, the greater the value of the function $h$. Additionally, the larger $\zeta_1$ is, the greater the value of the function $h$. Moreover, since $\|A\bm{\mu}\|_1 \le \|A\|_1 \|\bm{\mu}\|_1$ holds by norm inequality, we have:
    \begin{align}
        \zeta_2 \geq -\log\|A\|_1 - \log(\|\bm{\mu}\|_1) + \frac{<f, g>}{\sigma^2} = \zeta_1 -\log\|A\|_1.
    \end{align}

 By the definition of $A$, we have $\|A\|_1 = e^{\frac{\|g\|_2^2}{\sigma^2}}$, since $e^{\frac{\|g\|_2^2}{\sigma^2}}$ is the maximal element in the diagonal of $A$. The largest gap between $\zeta_1$ and $\zeta_2$ is achieved at $\|A\bm{\mu}\|_1 = \|A\|_1 \|\bm{\mu}\|_1$ which holds under two conditions: 1) $A = e^{\frac{\|g\|_2^2}{\sigma^2}}E$, where $E$ is the identity matrix; 2) $A\bm{\mu} = e^{\frac{\|g\|_2^2}{\sigma^2}}\bm{\mu}$, where $\bm{\mu}$ is an eigenvector of the matrix $A$ corresponding to the eigenvalue $e^{\frac{\|g\|_2^2}{\sigma^2}}$. Below, we will discuss each case separately and determine the conditions to achieve the maximum value of $h$.

The first case implies that the inner products of $g^{(i)}$ and $g^{(j)}$ are all equal to $\|g\|_2^2$, which is almost impossible for $k > 1$. To exclude the case where every $g^{(i)}$ is close to the vector $ [1,1, \ldots, 1] $, we ensure that in the high-dimensional space, the probability that the angle between $g^{(i)}$ and $[1, 1, \ldots, 1]$ being less than $\pi/4$ is almost zero. Therefore, we will not consider this case here.

For the second case, it is clear that the eigenvectors of the diagonal matrix $A$ are standard base vectors $e_i$ which has a 1 in the $i$-th position and 0 elsewhere. Therefore, the vector $\bm{\mu}$ is close to the direction of one of these base vectors. According to the observation that the larger $\zeta_1$ is, the larger the value of the function $h$ will be, and thus we need to maximize $\zeta_1$.

    Since
    \begin{align}
        \zeta_1 &= -\log(e^{\frac{<f, g>}{\sigma^2}} + \sum_{j\neq i} e^{\frac{<f^{(i)}, g^{(j)}>}{\sigma^2}}) + \frac{<f, g>}{\sigma^2} \\
        & = - \log( \sum_{j =1}^{k} e^{\frac{<f^{(i)}, g^{(j)}> - <f, g>}{\sigma^2}}),
    \end{align}
	the larger $<f, g>$ is, and the smaller $<f^{(i)}, g^{(j)}>$ ($j \neq i$) is, the larger $\zeta_1$ becomes. Moreover, as $<f^{(i)}, g^{(j)}>$ is lower bounded by
	  \begin{align}
		e^{\mu_{ij}} = e^{\frac{<f^{(i)}, g^{(j)}>}{\sigma^2}} \geq e^{-\frac{\|f\|_2\|g\|_2}{\sigma^2}}, \label{mu_lower_bound}
	\end{align}
	 the smaller the value of $k$, the larger $\zeta_1$ is, with the minimum value of $k$ being $d$. Therefore, given the constraints $\|g\|_2 \le c$ and $\|f\|_2 \le c'$, we reformulate the problem to maximize the following optimization problem:
    \begin{align}
      \text{maximize}  ~\sum_{j = 1}^{k} (<f, g> - <f^{(i)}, g^{(j)}>).
    \end{align}
    Note that $<f, g>$ is maximized when $f$ and $g$ are of the same direction. The original problem can be transformed into:
    \begin{align}
       \text{maximize} ~  k(<f, g> - <f^{(i)}, \frac{1}{k}\sum_{j = 1}^{k}g^{(j)}>)
    \end{align}
    Since $f$ and $g$ are in the same direction, we must have $<f^{(i)}, \frac{1}{k}\sum_{j=1}^{k}g^{(j)}> \geq 0$. When $<f^{(i)}, \frac{1}{k}\sum_{j=1}^{k}g^{(j)}> = 0$, $\zeta_1$ reaches its maximum value. In this case, since $\frac{1}{k}\sum_{j=1}^{k}g^{(j)}$ is parallel to $[1,1,\ldots,1]$, and $f$ and $g$ are in the same direction, then $g \bot [1,1,\ldots,1]$. Meanwhile, since $k=d$ when $\zeta_1$ is maximized, $g$ cannot be in any other form other than having one dimension different with the rest being the same. Combining the two criteria, we could solve $g$ and $f$ as
    \begin{align}\label{appendix: eq_worst_case}
        f = \sqrt{\frac{1}{d(d-1)}}c^{\prime}[(d-1), -1, \ldots, -1] \\
        g = \sqrt{\frac{1}{d(d-1)}}c[(d-1), -1, \ldots, -1].
    \end{align}
    Under these conditions, we have
    \begin{equation}\label{appendix: eq_zeta_sigma}
        \begin{split}
        \zeta_1 & = -\log(1 + (d-1)e^{-\frac{d c c^{\prime}}{(d-1)\sigma^2}}), \\
        \zeta_2 & = -\log(1 + (d-1)e^{-\frac{d c (c+c^{\prime})}{(d-1)\sigma^2}}) - \frac{c^2}{\sigma^2}, \\
        \sigma_{Z_1}^2 &= \log \left[\frac{ 1 + (d-1)e^{-\frac{2 d c c^{\prime}}{(d-1)\sigma^2}} }{(1 + (d-1)e^{-\frac{dc c^{\prime}}{(d-1)\sigma^2}})^2} (e^{\frac{\|g\|_2^2}{\sigma^2}} -1) + 1 \right], \\
        \sigma_{Z_2}^2 &= \log \left[\frac{ 1 + (d-1)e^{-\frac{2 d c (c+c^{\prime})}{(d-1)\sigma^2}} }{(1 + (d-1)e^{-\frac{dc (c + c^{\prime})}{(d-1)\sigma^2}})^2} (e^{\frac{\|g\|_2^2}{\sigma^2}} -1) + 1 \right],              
        \end{split}
    \end{equation}
    which complete the proof.
\end{proof}

\section{Appendix of Experiments}
\subsection{Python Code for Index Shuffling}
\label{appendix:shuffle code}
\begin{lstlisting}[language=Python]
def rearrange(matrix, row, column):
 if row:
   indices = torch.randperm(matrix.shape[0])
   matrix = matrix[indices]
 if column:
   indices = torch.randperm(matrix.shape[1])
   matrix = matrix[:, indices]
 return matrix
\end{lstlisting}

\end{document}